\newtheorem{thm}{Theorem}
\newtheorem{cor}[thm]{Corollary}
\newtheorem{lem}[thm]{Lemma}
\newcommand{\bc}{\begin{center}}
\newcommand{\ec}{\end{center}}
\newcommand{\bq}{\begin{quote}}
\newcommand{\eq}{\end{quote}}
\newcommand{\be}{\begin{equation}}
\newcommand{\ee}{\end{equation}}
\newcommand{\beqa}{\begin{eqnarray*}}
\newcommand{\eeqa}{\end{eqnarray*}}
\newcommand{\beqn}{\begin{eqnarray}}
\newcommand{\eeqn}{\end{eqnarray}}
\newcommand{\bbibl}{}
\newcommand{\ba}{\begin{array}}
\newcommand{\ea}{\end{array}}
\DeclareMathOperator*{\argmax}{argmax}
\DeclareMathOperator*{\argmin}{argmin}
\newcommand{\E}{\mathbb{E}}
\DeclarePairedDelimiter\floor{\lfloor}{\rfloor}
\begin{document}

%

%

\twocolumn[

\aistatstitle{Online Ranking with Top-1 Feedback}

\aistatsauthor{ Sougata Chaudhuri\And Ambuj Tewari}

\aistatsaddress{ University of Michigan, Ann Arbor \And University of Michigan, Ann Arbor } ]

\begin{abstract}
We consider a setting where a system learns to rank a fixed set of $m$ items. The goal is produce good item rankings for users with diverse interests who interact online with the system for $T$ rounds. We consider a novel top-$1$ feedback model: at the end of each round, the relevance score for only the top ranked object is revealed. However, the performance of the system is judged on the entire ranked list. We provide a comprehensive set of results regarding learnability under this challenging setting. For PairwiseLoss and DCG, two popular ranking measures, we prove that the minimax regret is $\Theta(T^{2/3})$. Moreover, the minimax regret is achievable using an efficient strategy that only spends $O(m \log m)$ time per round. The same efficient strategy achieves $O(T^{2/3})$ regret for Precision@$k$. Surprisingly, we show that for normalized versions of these ranking measures, i.e., AUC, NDCG \& MAP, no online ranking algorithm can have sublinear regret.
\end{abstract}

\section{Introduction}

Consider a system that is learning to rank a fixed set of objects for presentation to users, when different users have varied preferences for the objects. Learning occurs in an online setting: at each round, the system outputs a ranked list of the objects and the quality of ranking is measured by one of several popular ranking measures (like DCG \citep{jarvelin2000} or MAP \citep{baeza1999}), taking into account the users' preferences encoded as relevance vectors. We work in a game-theoretic setting and do not make any stochastic assumptions on how the relevance vectors are generated. Thus, it is assumed that the relevance vectors are generated by an oblivious, non-stochastic adversary. The objective of the learner is to have a sub-linear (in the number of rounds) regret against the best ranking in hindsight. The idea of ranking for diverse preferences has been motivated from a branch of work, sometimes called ``ranking with diversity".

Most of the existing work on``ranking with diversity"  \citep{radlinski2008,radlinski2009,agrawal2009} has focused on learning an optimal ranking of a fixed set of objects with a simple $0$-$1$ loss. The loss in a round is $0$  if among the top $k$ (out of $m$) objects presented to a user, the user finds at least one relevant object. Our model focuses on optimal ranking where the losses considered are practical ranking losses like DCG and MAP.  In addition, we consider a novel and challenging feedback model in this paper: \emph{the learner only gets to see the relevance of the object placed at the top} (rank $1$), whereas the ranking performance measure, and hence the regret, depends on the full relevance vector. Of course, one can consider a top-$c$ feedback model for a constant $c \ge 1$ that doesn't grow with $m$. We choose to focus on the most challenging $c=1$ case. We highlight two practical scenarios motivating the feedback model.

{\bf Economic Constraints}: A company wants to produce a ranked order of a fixed set of products related to a query. Different products are likely to have varying relevance to different users, depending on user characteristics such as age, gender, etc.  In principle, a user can browse through the entire ranked list giving carefully considered ratings, say on a 5 point scale, to each product. In practice, however, it is quite likely that user will scan through all the products and have a rough idea about how relevant each product is to her. But she will likely be reluctant to give thorough feedback on each product, unless the company provides some economic incentives to do so. Though the company needs high quality feedback on each product to keep refining the ranking strategy, it cannot afford to give incentives due to budget constraints. Hence, they require the user to give feedback only on top placed product. This allows the user to look at all products but does not burden her with task of providing feedback beyond the top-ranked product.  In this scenario, a full relevance vector is implicit in the user's mind but the system (company) gets to see, and possibly pays for, the relevance of only the top placed product.

{\bf User Burden Constraints}: A medical company wants to build an app to suggest activities (take a walk, meditate, watch relaxing videos, etc.) that can lead to reduction of stress in a certain highly stressed segment of the population. Not all activities are suitable for everyone under all conditions since the effects of the activities vary depending on the user attributes like age \& gender and on the context such as time of day \& day of week. To satisfy most users, the company wants to produce a useful ordering of the stress reduction activities, but stressed users are unlikely to give feedback on the usefulness (relevance) of every activity because it increases their cognitive burden. So the company can ask for feedback about just the top ranked activity while, as in the previous example, each activity has an implicit relevance score for each user. However, the system will only get to see the relevance of the top ranked suggested activity.

Theoretically, the top-1 feedback model is neither full-feedback nor bandit-feedback since not even the loss (quantified by some ranking measure) at each round is revealed to the learner. The appropriate framework to study the problem is that of \emph{partial monitoring} \citep{cesa2006}.  A very recent paper shows another practical application of the partial monitoring framework where the feedback is neither full nor bandit \citep{lincombinatorial}. Recent advances in the classification of partial monitoring games tell us that the minimax regret, in an adversarial setting, is governed by a property of the loss and feedback functions called \emph{observability} \citep{bartok2013, foster2011}.  Observability is of two kinds: \emph{local} and \emph{global}. We instantiate these general observability notions for the top-1 feedback case and prove that, for some ranking measures, namely PairwiseLoss \citep{duchi2010}, DCG and Precision@$k$ \citep{liu2007}, global observability holds. This immediately shows that the upper bound on regret scales as $O(T^{2/3})$. Specifically for PairwiseLoss and DCG, we further prove that local observability fails, which shows that their \emph{minimax} regret scales as $\Theta(T^{2/3})$. However, the generic algorithm that enjoys $O(T^{2/3})$ regret for globally observable games maintains an explicit distribution over learner actions. For us, the action set is the exponentially large set of $m!$ rankings over $m$ objects. We therefore provide an \emph{efficient} algorithm that exploits the structure of rankings. It runs in $O(m \log m)$ time per step and achieves a $O(T^{2/3})$ regret bound for PairwiseLoss, DCG and Precision@$k$. Moreover, the regret of our efficient algorithm has a logarithmic dependence on number of learner's actions (i.e., polynomial dependence on $m$), whereas the generic algorithm has a linear dependence on number of actions (i.e., exponential dependence on $m$).

For several measures, their \emph{normalized} versions are considered. For example, the normalized versions of PairwiseLoss, DCG and Precision@$k$ are called AUC \citep{cortes2004}, NDCG \citep{jarvelin2002} and MAP respectively. We show an unexpected result for the normalized versions: \emph{they do not} admit sub-linear regret algorithms under top-$1$ feedback. This is despite the fact that the opposite is true for their unnormalized counterparts! Intuitively, the normalization makes it hard to construct an unbiased estimator of the (unobserved) relevance vector. Surprisingly, we are able to translate this intuitive hurdle into a provable impossibility. Finally, we present some preliminary experiments to explore the performance of our efficient algorithm and compare its regret to its full information counterpart.


\section{Notation and Preliminaries}
\label{preliminaries}
We have a fixed set of $m$ objects numbered $\{1,2,\ldots,m\}$. A permutation $\sigma$ gives a mapping from objects to their ranks and its inverse $\sigma^{-1}$ gives a mapping from ranks back to objects. Thus, $\sigma(i)=j$ means object $i$ is placed at position $j$ while $\sigma^{-1}(i)=j$ means object $j$ is placed at position $i$. For a binary relevance vector $r \in \{0,1\}^m$, $r(i)$ indicates relevance level of object $i$. We denote $\{1,\ldots,n\}$ by $[n]$. The learner can choose from $m!$ actions (permutations) whereas nature/adversary can choose from $2^m$ outcomes (when relevance levels are restricted to binary) or from $n^m$ outcomes (when there are $n$ relevance levels). We sometimes refer to the $i$th player action (in some fixed ordering of $m!$ available actions) as $\sigma_i$ (resp. $i$th adversary action as $r_i$). With this convention, $\sigma(i)$ is a number but $\sigma_i$ is a permutation. Also, a vector can be row or column vector depending on context.

The oblivious adversary chooses the relevance vectors $r_t$ in advance but doesn't reveal them to the learner. At round $t$, the learner outputs a permutation (ranking) $\sigma_t$ of the objects (possibly using some internal randomization, based on feedback history so far). The quality of $\sigma_t$ is judged against $r_t$ by a ranking loss $RL$. \emph{Crucially, only the relevance of the top ranked object (i.e., $r_t(\sigma_t^{-1}(1))$) is revealed to the learner at end of round $t$}. Thus, the learner gets to know neither $r_t$ (full information problem) nor $RL(\sigma_t,r_t)$ (bandit problem). The objective of the learner is to minimize the expected regret with respect to best permutation in hindsight:
\begin{equation}
\begin{aligned}
\label{eq:objectiveregret}
\E_{\sigma_1, \ldots, \sigma_T}\left[ \sum_{t=1}^T RL(\sigma_t, r_t) \right] - \underset{\sigma}{\min} \ \sum_{t=1}^T RL(\sigma, r_t) .
\end{aligned}
\end{equation}
When $RL$ is a gain, not loss, we need to negate the quantity above.
We consider binary relevance but many of our techniques should easily extend to multi-graded relevance provided the performance measure has the right form.
The worst-case regret of a learner strategy is its maximal regret over all possible choices of $r_1,\ldots,r_T$. The {\bf minimax regret} is the minimal worst-case regret over all learner strategies.


\section{Ranking Measures}
\label{rankingmeasures}

We consider ranking measures which can be expressed in the form $f(\sigma) \cdot r$, where the function $f:\mathbb{R}^m \rightarrow \mathbb{R}^m$ is composed of $m$ copies of a univariate, monotonic, scalar valued function. Thus, $f(\sigma)= [f^s(\sigma(1)), f^s(\sigma(2)), \ldots, f^s(\sigma(m))]$, where $f^s: \mathbb{R} \rightarrow \mathbb{R}$. Monotonic (increasing) means $f^s(\sigma(i))\ge f^s(\sigma(j))$, whenever $\sigma(i) > \sigma(j)$. Monotonic (decreasing) is defined similarly. 
The following popular ranking measures can be expressed in the form $f(\sigma) \cdot r$.

{\bf PairwiseLoss \& SumLoss}: PairwiseLoss is defined as: $PL(\sigma,r)= \sum_{i=1}^m \sum_{j=1}^m \mathbbm{1}(\sigma(i) < \sigma(j))\mathbbm{1}(r(i) < r(j))$. PairwiseLoss cannot be directly expressed in the form of $f(\sigma)\cdot r$. Instead, we consider {\bf SumLoss}, defined as: $SumLoss(\sigma,r)= \sum_{i=1}^m \sigma(i)\ r(i)$. SumLoss has the form $f(\sigma) \cdot r$, where $f(\sigma)=\sigma$. 
It has been shown by \citet{ailon2014} that SumLoss differs from PairwiseLoss only by an $r$-dependent constant and hence the regret under the two measures are equal:
{\small
\begin{multline}
\label{rankloss-sumloss}
\sum_{t=1}^T PL(\sigma_t, r_t) -  \sum_{t=1}^T PL(\sigma, r_t) = \\
\sum_{t=1}^T SumLoss(\sigma_t, r_t) - \sum_{t=1}^T SumLoss(\sigma,r_t) .
\end{multline}
}

{\bf Discounted Cumulative Gain}: DCG, which admits non-binary relevance vectors, is defined as: 
$DCG(\sigma,r)= \sum_{i=1}^m \frac{2^{r(i)}-1}{\log_2(1+ \sigma(i))}$ and becomes $\sum_{i=1}^m \frac{r(i)}{\log_2(1+ \sigma(i))}$ for $r (i) \in \{0,1\}$. 
Thus, for binary relevance, $DCG(\sigma,r)$  has the form $f(\sigma) \cdot r$, where $f(\sigma)= [\frac{1}{\log_2(1+ \sigma(1))}, \frac{1}{\log_2(1+ \sigma(2))}, \ldots, \frac{1}{\log_2(1+ \sigma(m))}]$.

{\bf Precision@k Gain}: Precision@$k$ is defined as $Prec@k(\sigma,r)= \sum_{i=1}^m \mathbbm{1}(\sigma(i) \le k)\ r(i)$. Precision@$k$ can be written as $f(\sigma) \cdot r$ where $f(\sigma)=  [\mathbbm{1}(\sigma(1)<k), \ldots, \mathbbm{1}(\sigma(m)<k)]$. Our focus is on $k \ge 2$, since for $k=1$, top-$1$ feedback is actually the same as full information feedback, for which efficient algorithms exist. 

{\bf Normalized measures are not of the form ${\bf f(\sigma) \cdot r}$}: PairwiseLoss, DCG and Precion@$k$ are unnormalized versions of popular ranking measures, namely, Area Under Curve (AUC), Normalized Discounted Cumulative Gain (NDCG) and Mean Average Precision (MAP) respectively.  None of these can be expressed in the form $f(\sigma) \cdot r$.

{\bf NDCG}: $NDCG(\sigma,r)= \frac{1}{Z(r)}\sum_{i=1}^m \frac{2^{r(i)}-1}{\log_2(1+ \sigma(i))}$ and becomes $\frac{1}{Z(r)} \sum_{i=1}^m \frac{r(i)}{\log_2(1+ \sigma(i))}$ for $r(i) \in \{0,1\}$.  Here $Z(r)= \underset{\sigma}{\max} \sum_{i=1}^m  \frac{2^{r(i)}-1}{\log_2(1+ \sigma(i))}$ is the normalizing factor ($Z(r)= \underset{\sigma}{\max} \sum_{i=1}^m  \frac{r(i)}{\log_2(1+ \sigma(i))}$ for binary relevance). It can be clearly seen that $NDCG(\sigma, r)= f(\sigma) \cdot g(r)$, where $f(\sigma)$ is same as DCG but $g(r)= \frac{r}{Z(r)}$ is non-linear in $r$. 

{\bf MAP}: MAP is a gain function and is defined as: $MAP(\sigma,r)= \frac{1}{\|r\|_1}\sum\limits_{\substack{i=1}}^m \frac{\sum\limits_{j\le i} \mathbbm{1}(r(\sigma^{-1}(j))=1)}{i} \mathbbm{1}(r(\sigma^{-1}(i)=1)$. It can be clearly seen that MAP cannot be expressed in the form $f(\sigma) \cdot r$. 

{\bf AUC}: AUC is a loss function and is defined as: $AUC(\sigma,r)= \frac{1}{N(r)} \sum_{i=1}^m \sum_{j=1}^m \mathbbm{1}(\sigma(i) < \sigma(j))\mathbbm{1}(r(i) < r(j))$, where $N(r)=  (\sum_{i=1}^m \mathbbm{1}(r(i)=1)) \cdot (m-\sum_{i=1}^m \mathbbm{1}(r(i)=1))$. It can be clearly seen that AUC cannot be expressed in the form $f(\sigma) \cdot r$. 

All subsequent results will be for binary valued relevance vectors, unless stated otherwise.

\section{Summary of Results}
We summarize our main results here before delving into technical details. The regret bounds are over time horizon $T$, with learner playing against an \emph{oblivious adversary}. Unless otherwise stated, all proofs and extensions are given in the appendix.

{\bf Result 1}: The minimax regret under DCG and PairwiseLoss (and hence SumLoss) is $\Theta(T^{2/3})$.

{\bf Result 2}: An efficient algorithm, with running time $O(m \log m)$ per step, achieves the minimax regret under DCG and PairwiseLoss and also has a regret of $O(T^{2/3})$ for Precision@$k$. \emph{The precise minimax regret under Precision@$k$, $k\ge2$, remains an open issue}.

{\bf Result 3}: The minimax regret for any of the normalized versions -- NDCG, MAP and AUC -- is $\Theta(T)$. Thus, there is no algorithm that guarantees \emph{sublinear} regret for the normalized measures.

{\bf Result 4}: The minimax regret rate, as a function of $T$, both for DCG and NDCG, does not change (i.e., remains $\Theta(T^{2/3})$ and $\Theta(T)$ respectively) when we consider non-binary, multi-graded relevance vectors.

\section{Relevant Definitions from Partial Monitoring}
\label{partialmonitoring}
We develop all results in context of SumLoss. We then extend the results to other ranking measures. 
Our main results on regret bounds build on some of the theory for abstract partial monitoring games developed by \citet{bartok2013} and \citet{foster2011}.  For ease of understanding, we reproduce the relevant notations and definitions in context of SumLoss. 

{\bf Loss and Feedback Matrices}: The online learning game with the SumLoss measure and feedback being relevance of top ranked object, can be expressed in form of a pair of \emph{loss matrix} and \emph{feedback matrix}. The \emph{loss matrix} $L$ is an $m! \times 2^m$ dimensional matrix, with rows indicating the learner's actions (permutations) and columns representing adversary's actions (relevance vectors). The entry in cell $(i,j)$ of $L$ indicates loss suffered when learner plays action $i$ (i.e., $\sigma_i$) and adversary plays action $j$ (i.e., $r_j$), that is, $L_{i,j}= \sigma_i \cdot r_j= \sum_{k=1}^m \sigma_i(k)r_j(k)$. The \emph{feedback matrix} $H$ has same dimension as \emph{loss matrix}, with $(i,j)$ entry being the relevance of top ranked object, i.e., $H_{i,j}= r_j(\sigma_i^{-1}(1))$. When the learner plays action $\sigma_i$ and adversary plays action $r_j$, the true loss is $L_{i,j}$, while the feedback received is $H_{i,j}$. 

Table \ref{lossmatrix-table} and \ref{feedbackmatrix-table} illustrate the matrices, with number of objects $m=3$. In both the tables, the permutations indicate rank of each object and relevance vector indicates relevance of each object. For example, $\sigma_5= 3 1 2$ means object $1$ is ranked $3$, object $2$ is ranked $1$ and object $3$ is ranked 2. $r_5=100$ means object $1$ has relevance level $1$ and other two objects have relevance level $0$. Also, $L_{3,4}= \sigma_3 \cdot r_4= \sum_{i=1}^3 \sigma_3(i) r_4(i)= 2\cdot 0 + 1 \cdot 1 + 3 \cdot 1= 4$;  $ H_{3,4}= r_4(\sigma_3^{-1}(1))= r_4(2)= 1$. Other entries are computed similarly.

\begin{table}[t] 
\caption{Loss matrix $L$ for $m=3$} 
\label{lossmatrix-table}
\begin{center}
\tabcolsep=0.110cm
\begin{tabular}{c c c c c c c c c}  
\hline 
Objects & $r_1$ & $r_2$ & $r_3$ & $r_4$ & $r_5$ & $r_6$ & $r_7$ & $r_8$ \\ [0.4ex] 
\hline 123 & 000 & 001 & 010 & 011 & 100 & 101 & 110 & 111\\ [0.4ex]
\hline 
$\sigma_1= 1 2 3$ & 0 & 3 & 2 & 5 & 1 & 4 & 3 & 6 \\ 
$\sigma_2= 1 3 2$ & 0 & 2 & 3 & 5 & 1 & 3 & 4 & 6 \\ 
$\sigma_3= 2 1 3$ & 0 & 3 & 1 & 4 & 2 & 5 & 3 & 6  \\ 
$\sigma_4= 2 3 1$ & 0 & 1 & 3 & 4 & 2 & 3 & 5 & 6 \\ 
$\sigma_5= 3 1 2$ & 0 & 2 & 1 & 3 & 3 & 5 & 4 & 6 \\
$\sigma_6= 3 2 1$ & 0 & 1 & 2 & 3 & 3 & 4 & 5 & 6 \\
\hline 
\end{tabular} 
\end{center}
\end{table}


\begin{table}[t] 
\caption{Feedback matrix $H$ for $m=3$} 
\label{feedbackmatrix-table}
\begin{center}
\tabcolsep=0.110cm
\begin{tabular}{c c c c c c c c c}  
\hline 
Objects & $r_1$ & $r_2$ & $r_3$ & $r_4$ & $r_5$ & $r_6$ & $r_7$ & $r_8$ \\ [0.4ex] 
\hline 123 & 000 & 001 & 010 & 011 & 100 & 101 & 110 & 111\\ [0.4ex]
\hline 
$\sigma_1= 1 2 3$ & 0 & 0 & 0 & 0 & 1 & 1 & 1 & 1 \\ 
$\sigma_2= 1 3 2$ & 0 & 0 & 0 & 0 & 1 & 1 & 1 & 1 \\ 
$\sigma_3= 2 1 3$ & 0 & 0 & 1 & 1 & 0 & 0 & 1 & 1  \\ 
$\sigma_4= 2 3 1$ & 0 & 1 & 0 & 1 & 0 & 1 & 0 & 1 \\ 
$\sigma_5= 3 1 2$ & 0 & 0 & 1 & 1 & 0 & 0 & 1 & 1 \\
$\sigma_6= 3 2 1$ & 0 & 1 & 0 & 1 & 0 & 1 & 0 & 1 \\
\hline 
\end{tabular} 
\end{center}
\end{table}

Let $\ell_i \in \mathbb{R}^{2^m}$ denote row $i$ of $L$. Let $\Delta$ be the probability simplex in $\mathbb{R}^{2^m}$, i.e., $\Delta= \{ p \in \mathbb{R}^{2^m}: \forall \ 1 \le i \le 2^m, \ p_i \ge 0, \ \sum p_i = 1\}$.  
The following definitions, given for abstract problems by \cite{bartok2013}, has been refined to fit our problem context.

{\bf Definition 1}: Learner action $i$ is called optimal under distribution $p \in \Delta$, if $\ell_i \cdot p \le \ell_ j \cdot p$, for all other learner actions $1 \le j \le m!, \ j \neq i$.  For every action $i \in [m!]$, probability cell of $i$ is defined as $C_i =\{ p \in \Delta: \text{action } i \text{ is optimal under } p\}$. If a non-empty cell $C_i$ is $2^m-1$ dimensional (i.e, elements in $C_i$ are defined by only 1 equality constraint), then associated action $i$ is called \emph{Pareto-optimal}.

Note that since entries in $H$ are relevance levels of objects, there can be maximum of $2$ distinct elements in each row of $H$, i.e., $0$ or $1$ (assuming binary relevance). 

{\bf Definition 2}: The \emph{signal matrix} $S_i$, associated with learner's action $\sigma_i$, is a matrix with 2 rows and $2^m$ columns, with each entry $0$ or $1$, i.e., $S_i \in \{0,1\}^{2 \times 2^m}$. The entries of $\ell$th column of row $1$ and $2$ of $S_i$ are respectively: $(S_i)_{1,\ell}= \mathbbm{1}(H_{i,\ell}=0)$ and $(S_i)_{2,\ell}= \mathbbm{1}(H_{i,\ell}=1)$. 

Note that by definitions of signal and feedback matrices, the 2nd row of $S_i$ (2nd column of $S^{\top}_i)$) is precisely the $i$th row of $H$. The 1st row of $S_i$ (1st column of $S^{\top}_i)$) is the (boolean) complement of $i$th row of $H$.

\section{Minimax Regret for SumLoss}
The minimax regret for SumLoss will be established by showing that: a) SumLoss satisfies \emph{global observability}, and b) it does not satisfy \emph{local observability}. 

\subsection{Global Observability}
\label{global}
 
{\bf Definition 3}: The condition of \emph{global observability} holds, w.r.t. loss matrix $L$ and feedback matrix $H$, if for every pair of learner's actions $\{\sigma_i,\sigma_j\}$, it is true that $\ell_i -\ell_j \in \oplus_{k \in [m!]} Col(S_k^{\top})$ (where $Col$ refers to column space). 

The global observability condition states that the (vector) loss difference between any pair of learner's actions has to belong to the vector space spanned by columns of (transposed) signal matrices corresponding to all possible learner's actions.
We derive the following theorem on global observability for $SumLoss$.

\begin{thm}
\label{globalobservability}
The global observability condition, as per Definition 3, holds w.r.t. loss matrix  $L$ and feedback matrix $H$ defined for SumLoss, for any $m \ge 1$.
\end{thm}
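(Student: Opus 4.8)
The plan is to exploit the very specific structure of the top-1 feedback model: placing any object at rank $1$ reveals exactly that object's relevance, so by ranging over permutations we can ``observe'' the entire relevance vector one coordinate at a time. Concretely, for each object $o \in [m]$, pick any permutation $\pi_o$ with $\pi_o^{-1}(1) = o$, i.e. object $o$ placed at the top. By Definition 2 and the remark following it, the second row of $S_{\pi_o}$ (equivalently the second column of $S_{\pi_o}^{\top}$) is precisely the $\pi_o$-th row of $H$, whose $\ell$-th entry is $H_{\pi_o,\ell} = r_\ell(o)$. Thus the coordinate-profile vector $v_o \in \mathbb{R}^{2^m}$, defined by $(v_o)_\ell = r_\ell(o)$, satisfies $v_o \in Col(S_{\pi_o}^{\top}) \subseteq \oplus_{k \in [m!]} Col(S_k^{\top})$.

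The key step is then to express the loss difference as a linear combination of these observable vectors. Writing out $\ell_i$ and $\ell_j$ coordinatewise, the $\ell$-th entry of $\ell_i - \ell_j$ equals $\sigma_i \cdot r_\ell - \sigma_j \cdot r_\ell = \sum_{k=1}^m (\sigma_i(k) - \sigma_j(k))\, r_\ell(k)$. Reading this across all $2^m$ columns $\ell$ simultaneously yields the vector identity
\begin{equation}
\ell_i - \ell_j = \sum_{k=1}^m (\sigma_i(k) - \sigma_j(k))\, v_k .
\end{equation}
Since the right-hand side is a scalar linear combination of $v_1, \ldots, v_m$, and each $v_k$ lies in $\oplus_{k \in [m!]} Col(S_k^{\top})$, the difference $\ell_i - \ell_j$ lies in that span as well. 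As the pair $\{\sigma_i, \sigma_j\}$ was arbitrary, this is exactly the global observability condition of Definition 3.

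I do not expect a genuine obstacle here: the result is essentially a bookkeeping identity made possible by the linear (SumLoss) structure $L_{i,\ell} = \sigma_i \cdot r_\ell$ together with the fact that top-1 feedback reveals one relevance coordinate per action. The only point requiring a little care is confirming that the second column of $S_{\pi_o}^{\top}$ really is the coordinate-profile vector $v_o$; this follows directly from the stated correspondence between the second row of $S_i$ and the $i$-th row of $H$. Note that nothing in the argument uses the magnitudes of relevance levels or the value of $m$, so it holds for any $m \ge 1$, matching the theorem statement.
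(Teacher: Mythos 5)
Your proof is correct and takes essentially the same approach as the paper: both arguments rest on the observation that for a permutation placing object $o$ on top, the second column of the transposed signal matrix is the coordinate-profile vector $(r_\ell(o))_{\ell}$, and both express SumLoss rows as rank-weighted linear combinations of these vectors (the paper's identity $\ell_a = \sum_{j=1}^m j\,(S^{\top}_{\tilde{\sigma}_{j(a)}})_{:,2}$ is exactly your $\ell_a = \sum_{k=1}^m \sigma_a(k)\, v_k$ after reindexing by $k = \sigma_a^{-1}(j)$). The only cosmetic difference is that the paper shows each row $\ell_a$ individually lies in the span and then subtracts, while you write the difference $\ell_i - \ell_j$ directly.
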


\begin{proof}
For any $\sigma_a$ (learner's action) and $r_b$ (adversary's action), we have 
\begin{equation*}
\begin{aligned}
L_{a,b}=\sigma_a \cdot r_b= & \sum_{i=1}^m \sigma_a(i)r_b(i) \stackrel{1}{=} \sum_{j=1}^ m j\  r_b(\sigma_a^{-1}(j)) \stackrel{2}{=}  \\
& \sum_{j=1}^m j\ r_b(\tilde{\sigma}_{j(a)} ^{-1}(1)) \stackrel{3}{=} \sum_{j=1}^m j \ (S^{\top}_{\tilde{\sigma}_{j(a)}})_{r_b,2}.\\
\end{aligned}
\end{equation*}
Thus, we have
\begin{equation*}
\begin{aligned}
& \ell_a = \\
& [L_{a,1}, L_{a,2},\ldots, L_{a,2^m}]  = [L_{\sigma_a, r_1}, L_{\sigma_a,r_2},\ldots, L_{\sigma_a,r_{2^m}}] = \\
&  [\sum_{j=1}^m j \ (S^{\top}_{\tilde{\sigma}_{j(a)}})_{r_1,2}, \sum_{j=1}^m j \ (S^{\top}_{\tilde{\sigma}_{j(a)}})_{r_2,2}, .., \sum_{j=1}^m j \ (S^{\top}_{\tilde{\sigma}_{j(a)}})_{r_{2^m},2}]\\
& \stackrel{4}{=} \sum_{j=1}^m j \ (S^{\top}_{\tilde{\sigma}_{j(a)}})_{:,2} .
\end{aligned}
\end{equation*}

Equality 4 shows that $\ell_a$ is in the column span of $m$ of the $m!$ possible (transposed) signal matrices, specifically in the span of the 2nd columns of those (transposed) $m$ matrices. 
Hence, for all actions $\sigma_a$, it is holds that $ \ell_a \in { \oplus}_{k \in [m!]} Col(S_k^{\top})$. This implies that $\ell_a - \ell_b \in \oplus_{k \in [m!]} Col(S_k^{\top}), \ \forall \ \sigma_a,\sigma_b. $

1. Equality 1 holds because $\sigma_a(i)=j \Rightarrow i= \sigma_a^{-1}(j)$. 

2. Equality 2 holds because of the following reason. For any permutation $\sigma_a$ and for every $j \in [m]$, $\exists$ a permutation $ \tilde{\sigma}_{j(a)}$, s.t. the object which is assigned rank $j$ by $\sigma_a$ is the same object assigned rank $1$ by $\tilde{\sigma}_{j(a)}$, i.e., $\sigma_a^{-1}(j)= \tilde{\sigma}_{j(a)} ^{-1}(1)$.

3. In Equality 3, $(S^{\top}_{\tilde{\sigma}_{j(a)}})_{r_b,2}$ indicates the $r_b$th row and 2nd column of (transposed) signal matrix $S_{\tilde{\sigma}_{j(a)}}$, corresponding to learner action $\tilde{\sigma}_{j(a)}$. Equality 3 holds because $r_b(\tilde{\sigma}_{j(a)} ^{-1}(1))$ is the entry in the row corresponding to action $\tilde{\sigma}_{j(a)}$ and column corresponding to action $r_b$ of $H$ (see Definition 2). 

4. Equality 4 holds from the observation that for a particular $j$, $[(S^{\top}_{\tilde{\sigma}_{j(a)}})_{r_1,2}, (S^{\top}_{\tilde{\sigma}_{j(a)}})_{r_2,2}, \ldots, (S^{\top}_{\tilde{\sigma}_{j(a)}})_{r_{2^m},2}]$ forms the 2nd column of  $(S^{\top}_{\tilde{\sigma}_{j(a)}})$, i.e.,  $(S^{\top}_{\tilde{\sigma}_{j(a)}})_{:,2}$.
\end{proof}

\subsection{Local Observability}
\label{local}

{\bf Definition 4}: Two Pareto-optimal (learner's) actions $i$ and $j$ are called \emph{neighboring actions} if $C_i \cap C_j$ is a $(2^m-2)$ dimensional polytope (where $C_i$ is probability cell of action $\sigma_i$). The \emph{neighborhood action set} of two neighboring (learner's) actions $i$ and $j$ is defined as $N^{+}_{i,j}= \{ k \in [m!]: C_i \cap C_j \subseteq C_k\}$. 

{\bf Definition 5}: A pair of neighboring (learner's) actions $i$ and $j$ is said to be locally observable if  $\ell_i -\ell_j \in \oplus_{k \in N^{+}_{i,j}} Col(S_k^{\top})$. The condition of \emph{local observability} holds if every pair of neighboring (learner's) actions is locally observable. 

We now show that local observability condition fails for $L, H$ under SumLoss. First, we present the following two lemmas characterizing Pareto-optimal actions and neighboring actions for SumLoss.

\begin{lem}
\label{pareto-optimal}
For SumLoss, each learner's action $i$ is Pareto-optimal, where Pareto-optimality has been defined in Definition 1.
\end{lem}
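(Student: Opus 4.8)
The plan is to reduce the geometry in the $2^m$-dimensional simplex to the familiar rearrangement inequality on the $m$-dimensional vector of expected relevances. The first step is to note that for any $p \in \Delta$, writing $\bar{r} = \sum_b p_b\, r_b \in [0,1]^m$ for the expected relevance vector, linearity gives $\ell_i \cdot p = \E_{r \sim p}[\sigma_i \cdot r] = \sigma_i \cdot \bar{r} = \sum_{k=1}^m \sigma_i(k)\,\bar{r}(k)$. Hence action $i$ is optimal under $p$ precisely when $\sigma_i$ minimizes $\sigma \cdot \bar{r}$ over all permutations. Since $C_i$ lives in the $(2^m-1)$-dimensional simplex, showing $C_i$ is $(2^m-1)$-dimensional amounts to exhibiting a single $p^*$ in the relative interior of $\Delta$ at which $\sigma_i$ is the strict (unique) minimizer: the strict inequalities $\ell_i \cdot p^* < \ell_j \cdot p^*$ persist on an open neighborhood of $p^*$ in $\Delta$, and this relatively open set sits inside $C_i$.

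Next I would construct the target mean. For the fixed action $\sigma_i$, choose any $\bar{r}^* \in (0,1)^m$ whose coordinates strictly decrease along the ranking induced by $\sigma_i$, i.e.\ $\bar{r}^*(\sigma_i^{-1}(1)) > \bar{r}^*(\sigma_i^{-1}(2)) > \cdots > \bar{r}^*(\sigma_i^{-1}(m))$. Because these values are pairwise distinct, the rearrangement inequality guarantees that $\sum_k \sigma(k)\,\bar{r}^*(k)$ is minimized by the unique permutation that assigns rank $1$ to the largest coordinate of $\bar{r}^*$, rank $2$ to the second largest, and so on --- which is exactly $\sigma_i$. Thus $\sigma_i \cdot \bar{r}^* < \sigma \cdot \bar{r}^*$ for every permutation $\sigma \neq \sigma_i$.

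Finally I would realize $\bar{r}^*$ by a fully supported distribution. Taking $p^*$ to be the product distribution in which coordinate $k$ is independently $1$ with probability $\bar{r}^*(k) \in (0,1)$ yields $\E_{r \sim p^*}[r] = \bar{r}^*$, and since every coordinate probability lies strictly in $(0,1)$ the distribution $p^*$ assigns positive mass to all $2^m$ relevance vectors. Hence $p^*$ lies in the relative interior of $\Delta$, and by the reduction above $C_i$ is $(2^m-1)$-dimensional, so $\sigma_i$ is Pareto-optimal; as $\sigma_i$ was arbitrary, this holds for every learner action. The one point requiring care is exactly this last step: full-dimensionality of $C_i$ needs the strictly optimal witness to be an \emph{interior} point of the simplex, which is why an arbitrary $p^*$ realizing $\bar{r}^*$ will not suffice and the fully supported product construction is used instead; the rearrangement-inequality argument itself is routine.
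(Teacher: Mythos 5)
Your proof is correct, and its core is the same as the paper's: reduce $\ell_i \cdot p$ via linearity to $\sigma_i \cdot \bar r$ where $\bar r = \E_{r\sim p}[r]$, then invoke the rearrangement principle (the paper's ``dot product rule'') to identify when $\sigma_i$ is optimal. Where you differ is in certifying the dimension of $C_i$. The paper writes $C_i$ out explicitly as $\{p \in \Delta : \E_r[r(\sigma_i^{-1}(1))] \ge \cdots \ge \E_r[r(\sigma_i^{-1}(m))]\}$ and simply reads off that only the simplex equality constraint is active, hence dimension $2^m-1$; this is terse about why the $m-1$ weak inequalities do not cut the dimension. You instead exhibit a witness $p^*$ (a Bernoulli product distribution with strictly decreasing marginals along $\sigma_i$) at which $\sigma_i$ is the \emph{unique} minimizer, and use persistence of the finitely many strict linear inequalities on a relatively open neighborhood --- a more rigorous finish, at the cost of not producing the exact description of $C_i$, which the paper reuses later (the proof of Lemma~\ref{neighbor-actions} intersects precisely these explicit cells, so if you continued to that lemma you would want the characterization anyway, not just full-dimensionality). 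One small overstatement in your closing remark: interiority of $p^*$ is convenient but not actually necessary, since \emph{any} witness of strict optimality makes the strict-inequality region a nonempty relatively open subset of $\Delta$, and every such subset of the $(2^m-1)$-dimensional simplex is itself $(2^m-1)$-dimensional; this does not affect the validity of your argument.
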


\begin{lem}
\label{neighbor-actions}
A pair of learner's actions $\{\sigma_i, \ \sigma_j\}$ is a neighboring actions pair, if there is exactly one pair of objects, numbered \{a, b\}, whose positions differ in $\sigma_i$ and $\sigma_j$. Moreover, $a$ needs to be placed just before $b$ in $\sigma_i$ and $b$ needs to placed just before $a$ in $\sigma_j$. 
\end{lem}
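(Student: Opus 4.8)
The plan is to pass from the exponential-dimensional $p$-space to the $m$-dimensional space of \emph{expected relevance vectors}, where the cell structure becomes the familiar arrangement of permutation chambers. For $p \in \Delta$, linearity gives $\ell_i \cdot p = \sum_b p_b (\sigma_i \cdot r_b) = \sigma_i \cdot \bar r$, where $\bar r = \E_p[r] = \sum_b p_b r_b$ is the expected relevance vector. Writing $A$ for the linear map $p \mapsto \bar r$ (whose columns are the $2^m$ relevance vectors), we have $\ell_i = A^\top \sigma_i$, so the optimality condition $\ell_i \cdot p \le \ell_j \cdot p$ for all $j$ depends on $p$ only through $\bar r$. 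Because the adversary's actions range over \emph{all} of $\{0,1\}^m$, we have $A(\Delta) = [0,1]^m$ and, more importantly, $A$ maps $\mathrm{aff}(\Delta)$ (dimension $2^m-1$) affinely onto $\mathbb{R}^m$ with every fiber of the constant dimension $2^m-1-m$.

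First I would identify the cells in $\bar r$-space. By the rearrangement inequality, $\sigma_i$ minimizes $\sum_k \sigma(k)\,\bar r(k)$ over all permutations exactly when it lists the objects in nonincreasing order of $\bar r$, i.e.\ when $\bar r(\sigma_i^{-1}(1)) \ge \cdots \ge \bar r(\sigma_i^{-1}(m))$. Calling this region $D_{\sigma_i} \subseteq [0,1]^m$, we get $C_i = A^{-1}(D_{\sigma_i}) \cap \Delta$ and, intersecting, $C_i \cap C_j = A^{-1}(D_{\sigma_i} \cap D_{\sigma_j}) \cap \Delta$. The regions $\{D_\sigma\}$ are precisely the chambers cut out in $[0,1]^m$ by the braid arrangement of hyperplanes $\{\bar r(k) = \bar r(k')\}$; each is $m$-dimensional.

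The crux is the dimension translation: since $A$ is an affine surjection with constant fiber dimension $2^m-1-m$, a face $D$ of the arrangement that meets the relative interior of $[0,1]^m$ satisfies $\dim\!\big(A^{-1}(D)\cap\Delta\big) = \dim D + (2^m-1-m)$. Applied to $\dim D_\sigma = m$ this recovers Pareto-optimality (Lemma \ref{pareto-optimal}); applied to $C_i \cap C_j$ it shows that $C_i \cap C_j$ is $(2^m-2)$-dimensional \emph{iff} $D_{\sigma_i} \cap D_{\sigma_j}$ is $(m-1)$-dimensional, i.e.\ iff $D_{\sigma_i}$ and $D_{\sigma_j}$ share a facet of the arrangement. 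I would verify the hypothesis of the translation by exhibiting a point of the shared facet with $\bar r(a) = \bar r(b) \in (0,1)$ and all other coordinates distinct and in $(0,1)$, which lies in $(0,1)^m \subseteq A(\mathrm{relint}\,\Delta)$ (realized, e.g., by the product distribution with marginals $\bar r$).

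Finally I would read off the combinatorial meaning of facet-sharing. Two chambers of the braid arrangement are separated by a single hyperplane $\{\bar r(a) = \bar r(b)\}$ exactly when their orderings differ by transposing two objects that are \emph{consecutive} in the sorted order; equivalently, $\sigma_j$ is obtained from $\sigma_i$ by swapping the objects occupying two adjacent ranks. This is exactly the assertion that a unique pair $\{a,b\}$ changes position, with $a$ just before $b$ in $\sigma_i$ and $b$ just before $a$ in $\sigma_j$, which completes the characterization. The main obstacle I anticipate is the dimension-translation step: one must argue cleanly that the affine preimage under $A$ shifts dimensions by exactly $2^m-1-m$ and that this holds for the facet (not just the full-dimensional cells), which is where the surjectivity of $A$ onto $\mathbb{R}^m$ and the location of facet points in $(0,1)^m$ are essential.
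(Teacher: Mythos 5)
Your proposal is correct, and it reaches the lemma by a genuinely different route than the paper. The paper works directly in $p$-space: starting from the cell description in Lemma~\ref{pareto-optimal}, it writes $C_i \cap C_j$ as the chain $\E_r[r(\sigma_i^{-1}(1))] \ge \cdots \ge \E_r[r(\sigma_i^{-1}(k))] = \E_r[r(\sigma_i^{-1}(k+1))] \ge \cdots \ge \E_r[r(\sigma_i^{-1}(m))]$ together with $\sum_j p_j = 1$, and concludes that ``two equalities'' on a non-empty set give a $(2^m-2)$-dimensional polytope. That step is stated without a witness: it tacitly assumes the constrained set is full-dimensional relative to those two equalities, i.e.\ that there is a point of $\Delta$ realizing the tie with all other inequalities strict. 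Your factorization through the expected-relevance map $A$ supplies exactly the missing rigor: the fiber-dimension count $\dim\bigl(A^{-1}(D)\cap\Delta\bigr) = \dim D + (2^m-1-m)$, justified by surjectivity of $A$ onto $\reals^m$ and by the product-distribution construction showing $(0,1)^m \subseteq A(\mathrm{relint}\,\Delta)$, turns the paper's informal dimension count into a theorem about the braid arrangement, where the facet structure is standard. Your approach also proves more than the lemma asserts: the lemma is stated only as a sufficient condition (adjacent transposition implies neighboring), whereas your chamber argument gives the converse as well (non-adjacent differences force at least two ties, dropping the dimension below $2^m-2$) --- a fact the paper never packages into the lemma but effectively re-derives ad hoc inside the proof of Theorem~\ref{localobservability} when it determines the neighborhood action set $N^{+}_{1,2}$. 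The trade-off: the paper's argument is a few lines and stays elementary, while yours is longer but self-policing on the dimension bookkeeping, yields the full iff characterization, and recovers Lemma~\ref{pareto-optimal} as the special case $\dim D_\sigma = m$ of the same translation.
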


Lemma \ref{pareto-optimal} and \ref{neighbor-actions} lead to following result.
\begin{thm}
\label{localobservability}
The local observability condition, as per Definition 5, fails w.r.t. loss matrix $L$ and feedback matrix $H$ defined for SumLoss, already at $m=3$.
\end{thm}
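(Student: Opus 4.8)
The plan is to break local observability by exhibiting a \emph{single} neighboring pair of actions at $m=3$ whose loss difference escapes the span of the relevant signal matrices; by Definition~5 one such pair suffices. The key idea is to choose a pair whose adjacent swap occurs \emph{below} rank~$1$, so that both actions place the same object on top and hence produce \emph{identical} top-$1$ feedback. Concretely, I would take $\sigma_1 = 123$ and $\sigma_2 = 132$, which differ only by transposing the objects at ranks $2$ and $3$. By Lemma~\ref{neighbor-actions} this is a neighboring pair, and by Lemma~\ref{pareto-optimal} both actions are Pareto-optimal, so the pair is eligible.

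Next I would pin down the neighborhood action set $N^{+}_{1,2}$. Writing $\bar{r} = \sum_j p_j r_j$ for the expected relevance vector under $p \in \Delta$, one has $\ell_i \cdot p = \sigma_i \cdot \bar{r}$, so by the rearrangement inequality $\sigma_i$ is optimal under $p$ precisely when it sorts objects in nonincreasing order of $\bar{r}$. Hence $C_1 \cap C_2$ consists of those $p$ with $\bar{r}(1) \ge \bar{r}(2) = \bar{r}(3)$. Evaluating at an interior point of this face, e.g. $\bar{r} = (1,0,0)$ (attained by $p$ supported on $r_5 = 100$), the only optimal permutations are those ranking object $1$ first, namely $\sigma_1$ and $\sigma_2$; therefore $N^{+}_{1,2} = \{1,2\}$.

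I would then compare the two sides of the local-observability condition for this pair. Each $Col(S_k^{\top})$ is spanned by the $k$th row $h_k$ of $H$ together with its complement, so $Col(S_1^{\top}) + Col(S_2^{\top}) = \mathrm{span}\{\mathbf{1}, h_1, h_2\}$. Because $\sigma_1$ and $\sigma_2$ both rank object $1$ first, $h_1 = h_2$ equals the object-$1$ relevance vector $v_1 = (r_j(1))_j$, collapsing the span to the two-dimensional $\mathrm{span}\{\mathbf{1}, v_1\}$. On the other hand, the $j$th entry of $\ell_1 - \ell_2$ is $(\sigma_1 - \sigma_2)\cdot r_j = r_j(3) - r_j(2)$, so $\ell_1 - \ell_2 = v_3 - v_2$ with $v_a = (r_j(a))_j$. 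Reading off the tables gives $v_3 - v_2 = [0,1,-1,0,0,1,-1,0]$, whereas every element of $\mathrm{span}\{\mathbf{1}, v_1\}$ is constant across the first four coordinates (those with $r_j(1)=0$). Since $0,1,-1,0$ is not constant, $\ell_1 - \ell_2 \notin Col(S_1^{\top}) + Col(S_2^{\top})$, and local observability fails.

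The step I expect to demand the most care is the exact determination of $N^{+}_{1,2}$, since local observability could still hold if the neighborhood set were large enough to enlarge the signal span. Indeed, the contrasting pair $\{\sigma_1,\sigma_3\}$ that swaps ranks $1$ and $2$ \emph{is} locally observable, precisely because its top-ranked object changes and the feedback rows $h_1, h_3$ then span both relevant relevance vectors. The argument must therefore rule out any third permutation remaining optimal throughout the open face $\bar{r}(1) > \bar{r}(2) = \bar{r}(3)$, which is exactly what the evaluation at $(1,0,0)$ secures.
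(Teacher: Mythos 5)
Your proposal is correct and follows essentially the same route as the paper's own proof: the same neighboring pair $\{\sigma_1,\sigma_2\}$ with the swap below rank $1$, the same identification of $N^{+}_{1,2}=\{1,2\}$, and the same verification that $\ell_1-\ell_2=[0,1,-1,0,0,1,-1,0]$ lies outside $Col(S_{\sigma_1}^{\top})=\mathrm{span}\{\mathbf{1},h_1\}$. If anything, your evaluation at the point mass on $r_5=100$ and the constancy-on-the-first-four-coordinates argument make explicit two steps the paper dismisses as obvious or clear.
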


\subsection{Minimax Regret Bound}

We establish the minimax regret for SumLoss by combining results on global and local observability. First, we get a lower bound by combining our Theorem~\ref{localobservability} with Theorem 4 of \cite{bartok2013}.
\begin{cor}
\label{sumlossminimaxregret}
Consider the online game for SumLoss with top-$1$ feedback and $m=3$. Then, for every online learning algorithm, there is an adversary strategy generating relevance vectors, that guarantees the following
{\small
\begin{equation}
\begin{aligned}
\label{minimax-regret-sumloss}
\E\left[\sum_{t=1}^T SumLoss(\sigma_t,r_t)\right] - & \underset{\sigma}{\min} \sum_{t=1}^T SumLoss(\sigma, r_t)  \\
= & \ \Omega(T^{2/3}) .
\end{aligned}
\end{equation}
}

where the expectation is taken w.r.t. randomized learner's actions.
\end{cor}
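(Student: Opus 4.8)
The plan is to read Corollary~\ref{sumlossminimaxregret} as a direct instantiation of the finite partial-monitoring classification of \cite{bartok2013}. That classification places every non-degenerate game into exactly one regret regime, and its lower-bound half (their Theorem~4) asserts that any game which is \emph{not} locally observable but \emph{is} globally observable suffers minimax regret $\Omega(T^{2/3})$. We have already supplied both ingredients for the SumLoss game $(L,H)$ at $m=3$: Theorem~\ref{globalobservability} certifies global observability (ruling out the hopeless $\Theta(T)$ regime), and Theorem~\ref{localobservability} shows that local observability fails. Hence the game lands in the ``hard'' category and the bound follows by quoting the abstract theorem; no new information-theoretic argument needs to be built from scratch.

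Concretely I would proceed as follows. First, identify the $6\times 8$ matrices of Tables~\ref{lossmatrix-table} and~\ref{feedbackmatrix-table} as a bona fide instance of the B\'art\'ok et al.\ framework, so that the abstract definitions of probability cells, Pareto-optimality, neighbors, and $N^{+}_{i,j}$ apply verbatim. Second, invoke Lemma~\ref{pareto-optimal} to conclude that every one of the six actions is Pareto-optimal; this is exactly the non-degeneracy hypothesis (no dominated or duplicate actions) under which the cited lower bound is stated. Third, use Lemma~\ref{neighbor-actions} to pin down the neighbor structure---pairs of rankings differing by a single adjacent transposition---and combine it with Theorem~\ref{localobservability} to exhibit a specific neighboring pair $\{\sigma_i,\sigma_j\}$ for which $\ell_i-\ell_j \notin \oplus_{k\in N^{+}_{i,j}} Col(S_k^{\top})$. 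Fourth, feed this failing pair into Theorem~4 of \cite{bartok2013}: its construction builds two adversary distributions concentrated near the shared face $C_i\cap C_j$ that are statistically indistinguishable through the signals available from $N^{+}_{i,j}$ yet have different optimal actions, and balancing exploration cost against estimation error yields the $T^{2/3}$ rate.

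The genuinely new content is therefore confined to verifying the hypotheses of the cited theorem rather than re-deriving the lower bound itself, so the main obstacle is ensuring that our concrete cell geometry matches the abstract preconditions exactly. In particular I must check that the neighborhood action sets $N^{+}_{i,j}$ computed from the explicit simplex $\Delta\subset\mathbb{R}^{8}$ agree with the combinatorial description of Lemma~\ref{neighbor-actions}, and that no degeneracy collapses the failing pair into a larger locally observable configuration. Lemma~\ref{pareto-optimal} already discharges the absence of dominated actions, so the residual effort is this compatibility check, together with the observation that, since the failure of local observability occurs already at $m=3$, the same lower bound extends to all larger $m$ by restricting the adversary to relevance vectors supported on three objects.
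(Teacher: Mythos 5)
Your proposal is correct and follows essentially the same route as the paper: the paper obtains this corollary exactly by combining Theorem~\ref{localobservability} (failure of local observability at $m=3$, established via the cell geometry of Lemmas~\ref{pareto-optimal} and~\ref{neighbor-actions}, with $N^{+}_{1,2}=\{\sigma_1,\sigma_2\}$ and $\ell_1-\ell_2\notin Col(S^{\top}_{\sigma_1})$) with the abstract lower bound of Theorem~4 of \citet{bartok2013}. Your additional compatibility checks and the closing remark about extending to $m>3$ are sound but go beyond what the statement (fixed at $m=3$) requires.
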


An immediate corollary of Theorem~\ref{globalobservability} and Theorem 3.1 in \cite{cesa2006} gives an in-efficient algorithm (inspired by the algorithm originally given in \cite{piccolboni2001}) obtaining $O(T^{2/3})$ regret. 

\begin{cor}
The algorithm in Figure 1 of \cite{cesa2006} achieves $O(T^{2/3})$ regret bound for SumLoss.
\end{cor}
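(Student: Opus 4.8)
The plan is to derive this as a direct consequence of combining Theorem~\ref{globalobservability} with the generic partial monitoring guarantee of \cite{cesa2006}. The SumLoss game is a finite partial monitoring game: the learner has $m!$ actions, the adversary has $2^m$ actions, and the loss and feedback are fully specified by the finite matrices $L$ and $H$ constructed earlier. Theorem 3.1 of \cite{cesa2006} asserts that, for any finite partial monitoring game satisfying the global observability condition, the forecaster of their Figure 1 achieves expected regret $O(T^{2/3})$ against an oblivious adversary. So the entire burden of the corollary is to check that SumLoss falls inside the hypothesis of that theorem.

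First I would confirm that Definition 3 (global observability) coincides with the condition required by Theorem 3.1 of \cite{cesa2006}. Their condition is that every loss-difference vector $\ell_i - \ell_j$ lies in the span of the columns of the (transposed) signal matrices aggregated over all actions — which is exactly the statement $\ell_i - \ell_j \in \oplus_{k \in [m!]} Col(S_k^{\top})$. By Theorem~\ref{globalobservability} this holds for SumLoss for every $m \ge 1$, so the matching is immediate once the notational conventions (per-action signal matrices $S_k$ versus the aggregated estimator used in \cite{cesa2006}) are lined up.

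Second I would invoke the generic algorithm directly. The forecaster maintains an exponentially weighted distribution over the $m!$ learner actions and, using the global observability property, builds an unbiased estimator of each action's loss from the top-$1$ feedback: the linear relation $\ell_a = \sum_{j=1}^m j\,(S^{\top}_{\tilde{\sigma}_{j(a)}})_{:,2}$ established inside the proof of Theorem~\ref{globalobservability} is precisely what guarantees such an estimator exists. Plugging the verified global observability into Theorem 3.1 of \cite{cesa2006} then yields
\[
\E\left[\sum_{t=1}^T SumLoss(\sigma_t,r_t)\right] - \underset{\sigma}{\min} \sum_{t=1}^T SumLoss(\sigma, r_t) = O(T^{2/3}).
\]

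The main point here is bookkeeping rather than a genuine obstacle: the constant hidden in the $O(\cdot)$ scales \emph{linearly} in the number of learner actions, hence exponentially in $m$. Since the corollary concerns only the dependence on $T$, this factor is absorbed into the constant, and together with the lower bound of Corollary~\ref{sumlossminimaxregret} this pins the minimax rate at $\Theta(T^{2/3})$. This exponential-in-$m$ dependence is exactly what later motivates replacing the generic forecaster by the efficient $O(m\log m)$-per-step algorithm.
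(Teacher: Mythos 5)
Your proposal is correct and follows essentially the same route as the paper, which derives this corollary immediately by feeding the global observability guarantee of Theorem~\ref{globalobservability} into Theorem 3.1 of \cite{cesa2006} for the forecaster of their Figure 1. Your added remarks --- that the span identity $\ell_a = \sum_{j=1}^m j\,(S^{\top}_{\tilde{\sigma}_{j(a)}})_{:,2}$ is what licenses the unbiased loss estimator, and that the hidden constant scales linearly in the $m!$ actions --- match the paper's own observations about why this generic algorithm is intractable and must be replaced by the efficient one.
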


The results above establish that the minimax regret for SumLoss, under top-$1$ feedback model, is $\Theta(T^{2/3})$. However, the algorithm in \cite{cesa2006} is intractable in our setting since the number of learner's actions is exponential in number of objects $m$. The next section tackles the efficiency issue.

\section{Efficient Algorithm for Obtaining Minimax Regret under SumLoss}
\label{efficientalgorithm}
We provide an efficient algorithm for getting an $O(\mathrm{poly}(m)T^{2/3})$ regret bound for SumLoss. The per round running time of the algorithm is $O(m \log m)$.


The key idea that we use in our algorithm is to divide time horizon $T$ into phases. Within each phase, we allot a small number of rounds for pure \emph{exploration} (this lets us estimate the average relevance vector for that phase). The estimated average vector is fed into a full information algorithm to get the distribution over actions for the next phase. Rounds in the next phase choose actions according to the distribution suggested by Follow the Perturbed Leader (FTPL) \cite{kalai2005} (this is \emph{exploitation} of previous experience). One of the key reasons for using FTPL as the full information algorithm, instead of exponential weighing schemes, is that the structure of our problem allows the FTPL update to be implemented via a simple sorting operation on $m$ objects. Exponential weighting schemes would explicitly maintain distribution over $m!$ actions, a prohibitively expensive step.

Our algorithm is motivated by the reduction from bandit-feedback to full feedback given by \cite{nisan2007}. However, the reduction \emph{cannot be directly applied to our problem}, because we are not in the bandit setting and hence do not know loss of any action. Further, the algorithm of \cite{nisan2007} spends $N$ rounds per phase to try out \emph{each} of the $N$ available actions --- this is infeasible in our setting since $N = m!$. 

\paragraph{Discussion of Algorithm~\ref{alg:top-1}.} Our algorithm RTop-1F divides the time horizon into equal sized blocks of size $K$ (lines 2-3). At the beginning of each block, $m$ time points are selected uniformly at random  without replacement in that block (lines 8-9). Within each block, if the current time is one of the selected times for exploration, an arbitrary permutation that places a particular object on top is played (lines 12-14). Otherwise, the permutation which minimizes the dot product with the perturbed score vector is played (lines 16-19). Note that the step $\sigma_t = M(\hat{s}_i + p_t)$ requires sorting of the $m$ objects, which takes $O(m \log{m})$ time. Our main theorem on regret of Algorithm~\ref{alg:top-1} is as follows.
\begin{thm}
\label{efficientregret}
The expected regret of SumLoss, obtained by applying Algorithm \ref{alg:top-1}, with $K= m^{-1/3}T^{2/3}$ and $\epsilon= \sqrt{\frac{1}{mK}}$, and the expectation being taken over random learner's actions $\sigma_t$, is
\begin{multline}
\E\left[\sum_{t=1}^T SumLoss(\sigma_t, r_t)\right] \le \\
 \underset{\sigma}{\min}\sum_{t=1}^T SumLoss(\sigma_t, r_t) + O(m^{8/3}T^{2/3}) .
\end{multline}
\end{thm}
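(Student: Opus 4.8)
The plan is to bound the regret of RTop-1F by splitting it into the loss paid on exploration rounds and the regret of the exploitation rounds, and then to reduce the exploitation rounds to a full-information online linear optimization problem over the $N = T/K$ blocks. Because SumLoss is the linear map $\sigma \mapsto \sigma \cdot r$, the comparator $\min_\sigma \sum_{t=1}^T \sigma \cdot r_t$ equals $\min_\sigma \sum_{b=1}^N \sigma \cdot R_b$, where $R_b = \sum_{t \in b} r_t$ is the (unobserved) cumulative relevance of block $b$; and within a block the algorithm plays a single perturbed-leader permutation $\sigma_b = M(\hat s_{b-1} + p)$ on every exploitation round, so its exploitation loss on block $b$ is essentially $\sigma_b \cdot R_b$. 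The core idea is that FTPL, run on an estimated score vector $\hat s$, is exactly a full-information no-regret algorithm for this block game, with the added twist that $\hat s$ must be built from the top-$1$ feedback collected during exploration.

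First I would set up the estimator and charge the exploration cost. The $m$ exploration rounds of a block place each object, in turn, at the top, so that the revealed top-$1$ relevances yield a conditionally unbiased estimate $\hat R_b$ of $R_b$ (the single queried round for object $o$, rescaled by the block length, estimates $R_b(o)$ in expectation); formally $\E[\hat R_b \mid \mathcal F_{b-1}] = R_b$, where $\mathcal F_{b-1}$ is the history through block $b-1$. The exploration rounds number $mN = mT/K$, and each incurs SumLoss at most $\sum_i \sigma(i) = O(m^2)$, so their total contribution to the regret is $O(m^3 T/K)$, which is lower order in $T$ for the stated choice of $K$.

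Next I would transfer the exploitation regret onto the estimated losses. Since $\sigma_b$ depends only on $\mathcal F_{b-1}$ and the independent perturbation while $\E[\hat R_b \mid \mathcal F_{b-1}] = R_b$, the tower rule gives $\E[\sigma_b \cdot \hat R_b] = \E[\sigma_b \cdot R_b]$, and because a minimum of linear functions is concave, Jensen's inequality gives $\E[\min_\sigma \sum_b \sigma \cdot \hat R_b] \le \min_\sigma \sum_b \sigma \cdot R_b$. Subtracting, the true exploitation regret is at most the expected \emph{full-information} FTPL regret on the estimated sequence $\hat R_1, \ldots, \hat R_N$. I would then invoke the FTPL guarantee of \cite{kalai2005}: its regret splits into a perturbation/diameter term of order $\mathrm{poly}(m)/\eps$ and a stability term governed by the magnitude of the (estimated) losses and the learning rate $\eps$. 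The linear structure is what makes this efficient, since the perturbed leader $M(\hat s + p)$ is computed by a single $O(m \log m)$ sort rather than by maintaining weights over the $m!$ permutations.

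The final step is to collect the three contributions --- exploration cost $O(m^3 T/K)$, the FTPL perturbation term $O(\mathrm{poly}(m)/\eps)$, and the FTPL stability term --- and plug in $K = m^{-1/3} T^{2/3}$ and $\eps = \sqrt{1/(mK)}$; with these choices the perturbation and exploration terms are lower order and the stability term dominates at $O(m^{8/3} T^{2/3})$. The main obstacle is precisely this stability/estimation term. Unlike the bandit-to-full reduction of \cite{nisan2007}, we are not in the bandit model and cannot observe the loss of any single action, nor can we afford to try each of the $m!$ actions; the score $\hat s$ must be assembled entirely from top-$1$ observations, and the resulting estimates are both noisy and stale (held fixed throughout a block of length $K$). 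Showing that FTPL tolerates this estimated, block-constant score --- i.e.\ that the estimation variance and staleness inflate the regret only through the controlled stability term, so that the exploration/estimation trade-off balances at the $T^{2/3}$ rate rather than degrading further --- is the technical heart of the argument.
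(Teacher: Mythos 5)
Your overall route is in fact the paper's own: block the horizon, use the $m$ randomly placed exploration rounds to build an unbiased estimate of the block relevance (Lemma~\ref{unbiasedestimator}), pass from true to estimated losses via the tower rule using that the FTPL distribution for block $b$ depends only on $\hat{s}_1,\ldots,\hat{s}_{b-1}$ while SumLoss is linear in $r$, handle the comparator with Jensen's inequality, and invoke the Kalai--Vempala full-information FTPL bound on the induced block game; the $O(m\log m)$ sorting observation is also the paper's. So there is no conceptual divergence to report --- the issue is whether your final step actually yields the stated bound.

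It does not, and the gap traces to an inverted reading of $K$. In Algorithm~\ref{alg:top-1} and in the theorem, $K=m^{-1/3}T^{2/3}$ is the \emph{number} of blocks, each of length $T/K$; you take $K$ to be the block length with $N=T/K$ blocks. Under the correct convention, your claim that exploration is ``lower order'' is false: there are $mK$ exploration rounds, each costing up to $R=O(m^2)$, so exploration contributes $RmK=\Theta(m^{8/3}T^{2/3})$ --- exactly the same order as the FTPL contribution $(T/K)\cdot 2\sqrt{DRAK}=2T\sqrt{DRA/K}=\Theta(m^{8/3}T^{2/3})$ with $D=O(m^2)$, $R=O(m^2)$, $A=O(m)$. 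The stated $K=(DA/R)^{1/3}(T/m)^{2/3}$ is precisely the minimizer of $RmK+2T\sqrt{DRA/K}$, and likewise $\epsilon=\sqrt{D/(RAK)}=\sqrt{1/(mK)}$ equates the perturbation term $D/\epsilon$ with the stability term $\epsilon RAK$; the bound arises from balancing all of these, not from a single dominating stability term. Worse, under your own convention the plug-in fails numerically: FTPL on $N=T/K$ cumulative block losses of $\ell_1$-scale $mK$ has regret $\Theta\left(\sqrt{DRA\,KT}\right)$, which at your $K=m^{-1/3}T^{2/3}$ is $\Theta(m^{7/3}T^{5/6})$, asymptotically worse than $T^{2/3}$ (to land at $T^{2/3}$ the block length must be $\Theta((mT)^{1/3})$, i.e.\ there must be $m^{-1/3}T^{2/3}$ blocks, which is the paper's convention). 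Since you assert rather than compute the final $O(m^{8/3}T^{2/3})$, the last step of your argument fails as written; fixing the role of $K$ and balancing the three terms as above recovers the paper's proof verbatim.
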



\floatstyle{ruled}
\newfloat{algorithm}{htbp}{loa}
\floatname{algorithm}{Algorithm}
\begin{algorithm}
{\small
\caption{RankingwithTop-1Feedback(RTop-1F)}
\label{alg:top-1}
\begin{tabbing}
tabs \= tabs \= tabs \= tabs \kill
1: $T=$ Time horizon, $K=$ No. of (equal sized) blocks, \\
2: Time horizon divided into blocks $\{B_1,\ldots,B_K\}$,\\ 
3: where,  $B_i= \{(i-1)(T/K)+1, \ldots, i (T/K)\}$. \\
4: Randomization parameter $\epsilon$.\\
5: Initialize $\hat{s}_0=\mathbf{0} \in \mathbb{R}^m$, $\hat{r}_0=\mathbf{0} \in \mathbb{R}^m$.\\
6: {\bf For} \= $i= 1,\ldots,K$\\
7:\> Update $\hat{s}_i= \hat{s}_{i-1} + \hat{r}_{i-1}$.\\
8:\> Select $m$ time points $\{i_1,\ldots,i_m\}$ from block $B_i$,\\
9:\> uniformly at random, without replacement.\\
10:\> {\bf For} \=  $t \in B_i$\\
11:\>\> {\bf If} $t = i_j \in \{i_1,\ldots,i_m\}$\\
12:\>\>\> Output any permutation $\sigma_t$ which places\\ 
13:\>\>\> $j$th object on top.\\
14:\>\>\> Receive feedback on the $j$th object  $r_{i_j}(j)$.\\
15:\>\> {\bf Else} \\
16:\>\>\> Sample $p_t \in [0,1/\epsilon]^m$ from the product \\
17:\>\>\> of uniform distribution in each dimension.\\
18:\>\>\> Output permutation $\sigma_{t}= M(\hat{s}_i + p_t)$ \\
19:\>\>\> where $M(y)= \underset{\sigma}{\argmin} \ \sigma\cdot y$. \\
20:\> {\bf end for}\\
21:\> Set $\hat{r}_i= [r_{i_1}(1),\ldots,r_{i_m}(m)] \in \mathbb{R}^m$.\\
22:{\bf end for}
\end{tabbing}
}
\end{algorithm}

The following simple but useful lemma is required to prove Theorem \ref{efficientregret}.

\begin{lem}
\label{unbiasedestimator}
Let the average of full relevance vectors over the time period $\{1,2,\ldots,t\}$ be denoted as $r_{1:t}^{avg}$, that is, $r_{1:t}^{avg}= \sum_{k=1}^{t}\dfrac{r_k}{t}$. Let $\{i_1,i_2,\ldots,i_m\}$ be $m$ arbitrary time points, chosen uniformly at random, without replacement, from $\{1,\ldots,t\}$. At time point $i_j$, only the $j$th component of vector $r_{i_j}$, i.e., $r_{i_j}(j)$, becomes known, $\forall j \in \{1,\ldots,m\}$. Then the relevance vector $\hat{r}_{t}= [r_{i_1}(1),\ldots,r_{i_m}(m)]$ is an unbiased estimator of $r_{1:t}^{avg}$.
\end{lem}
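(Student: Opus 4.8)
The plan is to verify the identity $\E[\hat{r}_t] = r_{1:t}^{avg}$ coordinate by coordinate, since two vectors in $\reals^m$ are equal iff all $m$ of their components agree. By construction the $j$th coordinate of the estimator is $\hat{r}_t(j) = r_{i_j}(j)$, which depends on the random sample only through the single index $i_j$. Hence it suffices to show that, for each fixed $j \in [m]$,
\[
\E[r_{i_j}(j)] = \frac{1}{t}\sum_{k=1}^t r_k(j) = r_{1:t}^{avg}(j).
\]

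First I would isolate the only probabilistic fact that is actually needed: the marginal law of each index $i_j$ is uniform on $\{1,\ldots,t\}$. Although the $m$ time points are sampled \emph{without replacement} (so the indices $i_1,\ldots,i_m$ are not independent), marginal uniformity follows from a simple symmetry argument. The sampling scheme is invariant under any relabeling of the $t$ time points, so $\Pr(i_j = k)$ cannot depend on the value $k$; combined with $\sum_{k=1}^t \Pr(i_j = k) = 1$, this forces $\Pr(i_j = k) = 1/t$ for every $k \in \{1,\ldots,t\}$.

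Given marginal uniformity, the displayed coordinate identity follows immediately by averaging: $\E[r_{i_j}(j)] = \sum_{k=1}^t \Pr(i_j=k)\, r_k(j) = \frac{1}{t}\sum_{k=1}^t r_k(j) = r_{1:t}^{avg}(j)$. Assembling these $m$ scalar identities back into a vector then yields $\E[\hat{r}_t] = r_{1:t}^{avg}$, which is exactly the claim.

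I do not anticipate a serious obstacle here; the one subtlety to guard against is that the coordinates $r_{i_j}(j)$ are \emph{dependent} across $j$ (precisely because sampling is without replacement), so one must not mistakenly invoke independence. The argument sidesteps this entirely by working one coordinate at a time and using only linearity of expectation together with the marginal uniformity of each $i_j$; the joint distribution of $(i_1,\ldots,i_m)$ never enters the computation.
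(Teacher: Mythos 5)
Your proof is correct and follows essentially the same route as the paper's: both reduce the claim to a coordinate-wise computation via linearity of expectation, using that each index $i_j$ is marginally uniform on $\{1,\ldots,t\}$ despite the sampling being without replacement. The only difference is that you explicitly justify the marginal uniformity (and explicitly flag the cross-coordinate dependence as harmless), whereas the paper's one-line proof leaves both points implicit.
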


\section{Regret Bounds for PairwiseLoss, DCG and Prec@k}
As we saw in Eq.~\ref{rankloss-sumloss}, the regret of SumLoss is same as regret of PairwiseLoss. Thus, SumLoss in Cor.~\ref{sumlossminimaxregret} and Thm.~\ref{efficientregret} can be replaced by PairwiseLoss to get exactly same results on regret.

All the results of SumLoss can be extended to DCG. Moreover, the results can be extended even for discrete, non-binary relevance vectors. Thus, the minimax regret of DCG, when the adversary can take any discrete valued, non-negative relevance vector is $\Theta(T^{2/3})$, which can be achieved by (a slight variant of) the efficient algorithm of Sec.~\ref{efficientalgorithm}. The main differences between SumLoss and DCG are the following. The former is a loss function, the latter is a gain function. Also, $f(\sigma) \neq \sigma$  in DCG (Def. in Sec.\ref{preliminaries} ) and when $r \in \{0,1,\ldots,n\}^m$, DCG cannot be expressed as $f(\sigma) \cdot r$, as is clear from definition in Sec.~\ref{rankingmeasures}. Nevertheless, DCG can be expressed as $f(\sigma) \cdot g(r)$, , where $g(r)= [g^s(r(1)), g^s(r(2)), \ldots, g^s(r(m))], \ g^s(i)= 2^i -1$ is constructed from univariate, monotonic, scalar valued functions. Thus, there are minor differences in definitions and proofs of theorems for SumLoss and DCG. The structural properties of $f(\cdot)$, $g(\cdot)$ are key in extending results. For binary valued relevance vectors, Algorithm~\ref{alg:top-1} can be applied to DCG as is. For multi-graded relevance vector, the only thing that changes is  that the relevance feedback is transformed via component functions of $g(\cdot)$. 

We provide the extension of Theorem~\ref{efficientregret} for DCG. Let relevance vectors chosen by adversary be of level $n+1$, i.e., $r \in \{0,1,\ldots, n\}^m$. In practice, $n$ is almost always less than $5$.
\begin{thm}
\label{efficientregretforDCG}
The expected regret of $DCG$, obtained by applying Algorithm \ref{alg:top-1} , with $K= m^{-1/3}T^{2/3}$ and $\epsilon= \sqrt{\frac{1}{(2^n-1)^2mK}}$, and the expectation being taken over random learner's actions $\sigma_t$, is
\begin{multline}
\E\left[\sum_{t=1}^T DCG(\sigma_t, r_t)\right] \ge \underset{\sigma}{\max}\sum_{t=1}^T DCG(\sigma, r_t) \\
- O((2^n-1) m^{5/3}T^{2/3}) .
\end{multline}
In case of binary relevance vector, the regret term is $O(m^{5/3}T^{2/3})$. Moreover, since local observability fails, there is a matching $\Omega(T^{2/3})$ lower bound.
\end{thm}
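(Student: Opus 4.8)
The plan is to mirror the proof of Theorem~\ref{efficientregret} for SumLoss, tracking the three structural changes that distinguish DCG: it is a gain rather than a loss (so every $\min$ becomes a $\max$ and the $O(\cdot)$ term is subtracted), the action vector is $f(\sigma)$ rather than $\sigma$, and for multi-graded relevance the score takes the bilinear form $f(\sigma)\cdot g(r)$ with $g^s(i)=2^i-1$. The first thing to verify is that the efficient update survives: since $f^s$ is monotonically decreasing in rank and $g^s$ is monotonically increasing in relevance, $\max_\sigma f(\sigma)\cdot y$ for a fixed score vector $y$ is solved by sorting $y$ and giving its largest coordinate the top rank (rearrangement inequality). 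Hence the $M$-operation remains an $O(m\log m)$ sort; for DCG one simply replaces the $\argmin$ by the corresponding $\argmax$ and feeds in $g$-transformed feedback. Next I would extend Lemma~\ref{unbiasedestimator}: because at each chosen exploration time $i_j$ we observe the single coordinate $r_{i_j}(j)$ and apply the deterministic scalar map $g^s$, linearity of expectation over the uniform sampling shows that $[g^s(r_{i_1}(1)),\ldots,g^s(r_{i_m}(m))]$ is an unbiased estimator of the block average $\frac{1}{b}\sum_{t\in B_i} g(r_t)$, where $b=T/K$. The $g$-transformation changes nothing structurally except that the estimated entries now lie in $[0,2^n-1]$.

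I would then decompose the DCG regret over the $T$ rounds into the gain forgone on the $mK$ exploration rounds and the exploitation regret. For the former, each exploration round forgoes at most $\max_\sigma f(\sigma)\cdot g(r)\le (2^n-1)\sum_{i=1}^m f^s(i)=O((2^n-1)m)$, so the total exploration deficit is $O((2^n-1)m^2K)$; this is the analogue of the $O(m^3K)$ exploration cost in the SumLoss proof, improved by a factor of $m$ precisely because $\|f(\sigma)\|_\infty=O(1)$ whereas $\|\sigma\|_\infty=m$. For the latter, I would reduce the exploitation rounds to a block-level online linear optimization problem whose decision set is $\{f(\sigma)\}$ and whose per-block loss vectors are the cumulative block relevances $\sum_{t\in B_i} g(r_t)$, and apply the Kalai--Vempala FTPL bound \cite{kalai2005}, using the extended unbiasedness lemma to swap the estimated averages fed to FTPL for the true averages in expectation. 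This FTPL bound scales with the $L_1$-diameter of $\{f(\sigma)\}$ (now $O(m)$ rather than $O(m^2)$) and with the $(2^n-1)$ range of the transformed feedback, which is exactly why the balancing choice is $\epsilon=\sqrt{1/((2^n-1)^2 mK)}$. With $K=m^{-1/3}T^{2/3}$ the exploitation term is dominated, leaving the exploration deficit $O((2^n-1)m^{5/3}T^{2/3})$, which specializes to $O(m^{5/3}T^{2/3})$ at $n=1$.

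For the matching lower bound I would re-run the partial-monitoring classification for DCG. The global-observability argument of Theorem~\ref{globalobservability} goes through almost verbatim with $\sigma_a(i)$ replaced by $f^s(\sigma_a(i))$, since $\ell_a$ is still a fixed linear combination of the second columns of the signal matrices. The Pareto-optimality and neighbor characterizations (Lemmas~\ref{pareto-optimal} and~\ref{neighbor-actions}) and the failure of local observability (Theorem~\ref{localobservability}) carry over using the strict monotonicity of $f^s$, and invoking Theorem~4 of \cite{bartok2013} then yields the $\Omega(T^{2/3})$ lower bound, matching the upper bound in its $T$-dependence.

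I expect the main obstacle to be a faithful re-run of the FTPL analysis with the decreasing, bounded $f$ and the $(2^n-1)$-scaled feedback: confirming that the rearrangement structure makes $M$ an exact linear maximizer over the permutation polytope, and that the interaction between the perturbation scale $\epsilon$, the block length $b=T/K$, and the relevance range $2^n-1$ balances out so that the FTPL term is genuinely dominated and the exploration deficit emerges as the leading term with the stated constants. The gain-versus-loss sign bookkeeping and the $g$-transformation of the feedback are routine once the estimator unbiasedness is established.
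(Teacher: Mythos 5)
Your proposal takes essentially the same route as the paper: re-run the FTPL-with-blocking analysis of Theorem~\ref{efficientregret} with $g^s$-transformed feedback, sorting as the linear-optimization oracle (the paper notes $\underset{\sigma}{\argmax}\ f(\sigma)\cdot y = \underset{\sigma}{\argmin}\ \sigma\cdot y$), the extended unbiased estimator, and the Kalai--Vempala parameters $D=O(m)$, $R=A=O(m(2^n-1))$, with the lower bound obtained exactly as in the appendix by extending Lemmas~\ref{pareto-optimal} and~\ref{neighbor-actions} and exhibiting the $m=3$, $n=1$ local-observability failure. One immaterial slip: with $K=m^{-1/3}T^{2/3}$, which is precisely $(DA/R)^{1/3}(T/m)^{2/3}$, the FTPL exploitation term $2(T/K)\sqrt{DRAK}$ is not dominated by but \emph{balanced} against the $O((2^n-1)m^2K)$ exploration cost---that balancing is how $K$ is chosen---and since both terms are $O((2^n-1)m^{5/3}T^{2/3})$, your final bound is unaffected.
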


The regret upper bounds we proved for SumLoss also easily extend to Precision@$k$. We have the following extension of Theorem~\ref{efficientregret}.

\begin{thm}
\label{efficientregretforPrec@k}
The expected regret of $Prec@k$, obtained by applying algorithm \ref{alg:top-1}, with $K= m^{-1/3}T^{2/3}$ and $\epsilon= \sqrt{\frac{1}{mK}}$, and the expectation being taken over random learner's actions $\sigma_t$, is
\begin{multline}
\E\left[\sum_{t=1}^T Prec@k(\sigma_t, r_t)\right]  \ge \underset{\sigma}{\max}\sum_{t=1}^T Prec@k(\sigma, r_t) \\
- O(k m^{2/3} T^{2/3}) .
\end{multline}
\end{thm}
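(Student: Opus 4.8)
The plan is to mirror the analysis of Theorem~\ref{efficientregret} for SumLoss, exploiting the fact that $Prec@k(\sigma,r) = f(\sigma)\cdot r$ with $f(\sigma) = [\mathbbm{1}(\sigma(1)\le k),\ldots,\mathbbm{1}(\sigma(m)\le k)]$ a coordinate-wise monotone feature map, exactly as $f(\sigma)=\sigma$ was for SumLoss. Since Prec@k is a gain, the FTPL map in lines 18--19 of Algorithm~\ref{alg:top-1} is read as $M(y)=\argmax_\sigma f(\sigma)\cdot y$, which merely places the $k$ objects with largest score $\hat{s}_i + p_t$ in the top $k$ slots---still an $O(m\log m)$ sort, preserving efficiency. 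First I would decompose the comparator gap $\max_\sigma \sum_t f(\sigma)\cdot r_t - \E[\sum_t f(\sigma_t)\cdot r_t]$ into the contribution of the $mK$ exploration rounds and that of the exploitation rounds, handling each separately.

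For the exploration rounds I would bound each round crudely: for binary $r$ one has $Prec@k(\cdot,r)\in[0,k]$, so the per-round gap is at most $k$, and with $m$ exploration time points in each of the $K$ blocks this contributes at most $kmK$. Plugging $K=m^{-1/3}T^{2/3}$ gives $kmK = km^{2/3}T^{2/3}$, precisely the advertised rate; the whole point is that this exploration cost dominates.

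For the exploitation rounds I would invoke Lemma~\ref{unbiasedestimator}: within block $i$ every exploit round plays $M(\hat{s}_i + p_t)$ with a fresh perturbation, $\hat{s}_i=\sum_{l<i}\hat{r}_l$ is (up to the block-length scaling $T/K$) an unbiased estimate of the cumulative relevance, and $\E_p[f(M(\hat{s}_i+p))]$ is the expected FTPL decision used throughout the block. This reduces the exploitation gap to the regret of FTPL in a $K$-round full-information online linear optimization game whose decisions are the vectors $f(\sigma)$ and whose per-block gain vectors are the $\hat{r}_i\in\{0,1\}^m$. Applying the standard Kalai--Vempala FTPL bound and rescaling to the true game yields a perturbation term of order $(T/K)(\mathrm{diam}_1(f)/\epsilon)$ plus a stability/estimator-variance term; the key simplification relative to SumLoss is that here $\|f(\sigma)\|_1=k$ and $\|f(\sigma)\|_\infty=1$ (versus $\|\sigma\|_1=\Theta(m^2)$), so $\mathrm{diam}_1(f)\le 2k$ and all magnitude-dependent constants collapse. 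With $\epsilon=\sqrt{1/(mK)}$ one gets $T/K = m^{1/3}T^{1/3} = 1/\epsilon$, so the perturbation term is $(m^{1/3}T^{1/3})(2k)(m^{1/3}T^{1/3}) = O(km^{2/3}T^{2/3})$, matching the exploration cost, while the stability term is lower order. Summing the two contributions gives the claimed $O(km^{2/3}T^{2/3})$ bound.

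The main obstacle is the exploitation term: one must carry the norms of the Prec@k feature map correctly through the FTPL regret bound and through the bias/variance accounting of the single-sample estimator $\hat{r}_i$, and verify that with the stated $K$ and $\epsilon$ this term is genuinely no larger than the $kmK$ exploration cost. This is exactly where the improvement from $m^{8/3}$ for SumLoss down to $m^{2/3}$ for Prec@k originates, since the decision vectors shrink from $\ell_1$-norm $\Theta(m^2)$ to $k$. A cosmetic point---the discrepancy between $\mathbbm{1}(\sigma(i)\le k)$ in the definition and $\mathbbm{1}(\sigma(i)<k)$ in the stated $f$---changes nothing, since both select a fixed-size top block.
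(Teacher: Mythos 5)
Your proposal is correct and takes essentially the same route as the paper, which proves Theorem~\ref{efficientregretforPrec@k} by instantiating the generic blocked-FTPL bound of Eq.~\ref{eq:mainforregret} --- exploration cost $RmK$ plus $2(T/K)\sqrt{DRAK}$ --- with the Prec@$k$ parameters $D=O(k)$, $R=O(k)$, $A=O(m)$, exactly your norm accounting, and your stated $K$ and $\epsilon$ recover the paper's optimized choices $K=(DA/R)^{1/3}(T/m)^{2/3}$ and $\epsilon=\sqrt{D/(RAK)}$. One cosmetic slip: at this $\epsilon$ the stability term $\epsilon RAK$ is \emph{balanced with}, not lower order than, the perturbation term $D/\epsilon$ (after the $T/K$ rescaling both contribute $O(km^{2/3}T^{2/3})$), which leaves the conclusion unchanged.
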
 

However, the value of $Prec@k$ is independent of the order of objects in the top $k$ positions of the ranked list. This changes the neighboring action claims. Therefore, the minimax regret of $Prec@k$ remains an open question, since
we do not have local observability failure results for $Prec@k$.

\section{Non-Existence of Sublinear Regret Bounds for NDCG, MAP and AUC}
As stated in Sec.~\ref{rankingmeasures}, NDCG, MAP and AUC are normalized versions of measures DCG, Precision@$k$ and PairwiseLoss. We have the following lemma for all these normalized ranking measures.

\begin{lem}\label{globalfailsfornormalized}
The global observability condition, as per Definition 1, fails for NDCG, MAP and AUC.
\end{lem}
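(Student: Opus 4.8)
The plan is to first identify the subspace $\oplus_{k\in[m!]} Col(S_k^\top)$ explicitly, and then certify that the relevant loss differences lie outside it. By Definition 2, the two columns of $S_k^\top$ are the vector $v_{o_k} := (r_1(o_k),\ldots,r_{2^m}(o_k))^\top$ indexed by the adversary's actions and its boolean complement $\mathbf{1}-v_{o_k}$, where $o_k=\sigma_k^{-1}(1)$ is the object that $\sigma_k$ places on top. As $k$ ranges over all $m!$ actions, $o_k$ ranges over all $m$ objects, so
\[
\oplus_{k\in[m!]} Col(S_k^\top)=\mathrm{span}\{\mathbf{1},v_1,\ldots,v_m\}.
\]
Reading a column-indexed vector as a function of the relevance vector $r\in\{0,1\}^m$, this is exactly the space of \emph{affine} functions $r\mapsto c_0+\sum_{o=1}^m c_o\,r(o)$. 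Consequently, global observability (Definition 3) holds if and only if, for every pair of learner actions $\sigma_i,\sigma_j$, the map $r\mapsto RL(\sigma_i,r)-RL(\sigma_j,r)$ is affine in $r$ on the Boolean cube. This is precisely why the unnormalized measures of the form $f(\sigma)\cdot r$ pass (each $\ell_i$ is already linear in $r$), and it is the property I must violate for the normalized ones.

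The tool for violating affineness is the discrete mixed second difference. For coordinates $a,b$ and a base $r$ with $r(a)=r(b)=0$, set
\[
\Delta_{a,b}D(r):=D(r+e_a+e_b)-D(r+e_a)-D(r+e_b)+D(r),
\]
which vanishes identically for every affine $D$. Hence to prove failure it suffices, for each measure, to exhibit a single pair $\{\sigma_i,\sigma_j\}$ and one \emph{combinatorial square} (a base $r$ and two coordinates $a,b$) on which $\Delta_{a,b}\big(RL(\sigma_i,\cdot)-RL(\sigma_j,\cdot)\big)\neq0$. Because an affine function would be affine on \emph{all} columns, it is enough that the four corners of one square refute affineness, so I may freely restrict attention to columns where the measure is well-defined and ignore degenerate columns (e.g.\ $r=\mathbf{0}$).

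For NDCG I would take $m=3$ and let $\sigma_i,\sigma_j$ be the two rankings that swap objects $1$ and $2$ while fixing object $3$. The difference collapses to $\tfrac{c}{Z(\|r\|_1)}\big(r(1)-r(2)\big)$ with $c=1-1/\log_2 3>0$, and the square with base $r=010$ and coordinates $a=1,b=3$ gives $\Delta_{1,3}D=c\big(1/Z(2)-1\big)\neq0$, since $Z$ is strictly increasing in $\|r\|_1$; all four corners have a relevant object, so $Z$ is defined throughout. The same mechanism handles MAP and AUC: swapping two objects produces a difference whose numerator is linear in $r$ but is divided by a nonlinear normalizer ($\|r\|_1$ for MAP, $N(r)$ for AUC), so toggling a third coordinate to change the normalizer breaks affineness; here I would pick a base already containing both a relevant and an irrelevant object so that $N(r)\neq0$ (resp.\ $\|r\|_1\neq0$) at every corner.

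Identifying the global span with the affine functions is routine once Definition 2 is unwound, so the real work is the per-measure witness. I expect the main obstacle to be MAP, whose value depends on the \emph{positions} of the relevant objects through the cumulative-precision sum rather than through a simple dot product; writing its loss difference in closed form over a square and verifying a nonzero second difference is more delicate than for NDCG or AUC. I would therefore dispose of NDCG and AUC by the clean normalizer argument above and settle MAP by direct computation on an explicit small square (again $m=3$, two swapped objects and a third coordinate toggled to change $\|r\|_1$), which suffices to conclude that global observability fails for all three normalized measures.
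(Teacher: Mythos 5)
Your route is genuinely different from the paper's, and it is sound for NDCG and MAP. The paper proves the lemma by brute-force linear algebra: it writes out the full rows $\ell_{\sigma_1}$ and $\ell_{\sigma_6}$ at $m=3$ for NDCG and MAP (and $\ell_{\sigma_1}$, $\ell_{\sigma_{24}}$ at $m=4$ for AUC), lists the signal matrices, and asserts that the difference vector ``can be easily checked'' not to lie in the combined column span. You instead characterize the span once and for all: since the columns of $S_k^\top$ are $v_{o_k}$ and $\mathbf{1}-v_{o_k}$ with $o_k=\sigma_k^{-1}(1)$ ranging over all $m$ objects, the global span is exactly the affine functions of $r$ on the Boolean cube, so global observability fails iff some loss difference has a nonvanishing mixed second difference. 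This is correct, and your NDCG witness checks out: with the adjacent transposition of objects $1,2$, $D(r)=c\,(r(1)-r(2))/Z(\|r\|_1)$ with $c=1-1/\log_2 3$, and the square at base $010$ with coordinates $1,3$ gives $\Delta_{1,3}D=c\bigl(1/Z(2)-1\bigr)\neq 0$ since $Z(2)=1+1/\log_2 3>1=Z(1)$. MAP at $m=3$ also goes through as you predict, and more easily than you fear: with $\sigma_i=123$, $\sigma_j=213$ one computes $D(100)=1/2$, $D(110)=0$, $D(101)=1/4$, $D(111)=0$, so $\Delta_{2,3}D=0-0-1/4+1/2=1/4\neq 0$. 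Your handling of the degenerate column $r=\mathbf{0}$ (restrict to a square avoiding it) is also legitimate, since membership in the span forces affineness on every subset of columns. What your approach buys is a checkable, conceptual certificate in place of the paper's unverified span computations, plus an explanation of \emph{why} normalization is fatal.

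The genuine gap is AUC. Your mechanism is ``toggle a third coordinate to change the normalizer,'' but at $m=3$ no such toggle exists: $N(r)=k(3-k)=2$ for every non-constant binary $r$, so on mixed vectors $D(r)=\bigl((\sigma_i-\sigma_j)\cdot r\bigr)/2$ by the PairwiseLoss--SumLoss identity, and this extends affinely to $r=000$ and $r=111$ (where $D=0$, and the linear part vanishes at $\mathbf{1}$ because permutations have equal coordinate sums). Hence \emph{every} loss difference is affine at $m=3$ and global observability \emph{holds} for AUC there --- exactly the point the paper flags when it justifies jumping to $m=4$. Your proposal never fixes $m$ for AUC and patterns the argument on the $m=3$ cases, so as written it would stall, and you misidentify MAP (a four-corner computation) rather than AUC's accidental affineness as the delicate spot. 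The repair is immediate and stays within your framework: at $m=4$, $N$ takes values $3,4,3$ for $\|r\|_1=1,2,3$; with $\sigma_i=1234$, $\sigma_j=2134$ one has $D(r)=(r(2)-r(1))/N(r)$ on mixed vectors, and the square at base $0100$ with coordinates $3,4$ gives $\Delta_{3,4}D=1/3-1/4-1/4+1/3=1/6\neq 0$.
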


Combining the above lemma with Theorem 2 of \cite{bartok2013}, we conclude that there \emph{cannot exist any algorithm which has sublinear regret for any of the following measures: NDCG, MAP or AUC, with top-1 feedback}.
\begin{thm}
There exists an online game for NDCG with top-1 feedback, such that for every online algorithm, there is an adversary strategy that guarantees the following
\begin{multline}
\label{minimax-regret}
\underset{\sigma}{\max} \sum_{t=1}^T NDCG(\sigma, r_t) - \E\left[ \sum_{t=1}^T NDCG(\sigma_t,r_t) \right]  \\
= \Omega (T) .
\end{multline}
Furthermore, the same lower bound holds if NDCG is replaced by MAP or AUC.
\end{thm}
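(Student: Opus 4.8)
The final statement is a linear lower bound, and the cleanest route is through the classification of finite partial monitoring games. The plan is to show the game is \emph{hopeless} in the sense of \cite{bartok2013}: their classification identifies the games with minimax regret $\Omega(T)$ precisely as those for which global observability fails, which is the content of their Theorem 2. Since Lemma~\ref{globalfailsfornormalized} already asserts that global observability fails for the loss/feedback matrices of NDCG, MAP and AUC, the theorem follows by instantiating that classification result. The entire burden therefore rests on Lemma~\ref{globalfailsfornormalized}, and I would organize the argument as (i) establish the observability failure, (ii) feed it into the classification theorem, and (iii) transfer the conclusion across the three measures.

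For step (i) the key is to characterize the global signal space $\oplus_{k \in [m!]} Col(S_k^{\top})$ explicitly. Because of the top-$1$ feedback structure, the informative (second) column of each $S_k^{\top}$ is the vector recording the relevance of the single object $\sigma_k^{-1}(1)$ placed on top, across all $2^m$ adversary outcomes, and its first column is the complement. As $\sigma_k$ ranges over all permutations the top object ranges over all of $[m]$, so the span collapses to $\mathrm{span}\{\mathbf{1}, v_1, \ldots, v_m\}$, where $v_o$ reads off object $o$'s relevance across adversary actions; this is precisely the space of affine functions of $r$. Theorem~\ref{globalobservability} shows the unnormalized loss rows lie in this space, since SumLoss is linear in $r$. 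The obstruction for the normalized measures is the nonlinear normalizer ($1/Z(r)$ for NDCG, $1/\|r\|_1$ for MAP, $1/N(r)$ for AUC). I would exhibit, already for small $m$, a pair of learner actions $a,b$ and four relevance vectors $r_1, r_2, r_3, r_4$ with $r_1 + r_4 = r_2 + r_3$ for which
\begin{equation*}
(\ell_a - \ell_b)(r_1) - (\ell_a - \ell_b)(r_2) - (\ell_a - \ell_b)(r_3) + (\ell_a - \ell_b)(r_4) \neq 0 ,
\end{equation*}
which certifies $\ell_a - \ell_b \notin \mathrm{span}\{\mathbf{1}, v_1, \ldots, v_m\}$, since every affine function satisfies this parallelogram identity with value zero.

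For steps (ii) and (iii), once a neighboring Pareto-optimal pair witnessing the failure is identified (a failure for some pair descends to a neighboring pair by telescoping loss differences along a path in the connected neighborhood graph, cf. Lemmas~\ref{pareto-optimal} and~\ref{neighbor-actions}), Theorem 2 of \cite{bartok2013} produces an oblivious adversary mixing two relevance vectors that are indistinguishable under top-$1$ feedback yet call for different optimal rankings, forcing $\Omega(T)$ regret against every learner. This yields the displayed bound for NDCG directly. Since Lemma~\ref{globalfailsfornormalized} supplies the identical observability failure for MAP and AUC, the same argument applies verbatim; the only bookkeeping is that AUC is a loss while NDCG and MAP are gains, which merely flips the sign in the regret definition.

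I expect the main obstacle to be step (i), i.e.\ proving Lemma~\ref{globalfailsfornormalized}. The delicate point is that for the unnormalized versions the loss difference provably stays inside the affine span, so the argument must pinpoint exactly how the normalizer pushes it out; constructing an explicit small witness and verifying the second difference is nonzero for each of the three structurally different normalizations is where the real work lies.
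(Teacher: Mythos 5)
Your proposal is correct and follows the paper's top-level route---establish Lemma~\ref{globalfailsfornormalized} and feed it into Theorem 2 of \cite{bartok2013}---but your verification of the key lemma is genuinely different from, and cleaner than, the paper's. The paper proceeds by brute force: for NDCG and MAP with $m=3$ (and AUC with $m=4$) it writes out the rows $\ell_{\sigma_1}$ and $\ell_{\sigma_6}$ (resp.\ $\ell_{\sigma_{24}}$) together with all the signal matrices, and then asserts that the difference vector can be ``easily checked'' not to lie in the combined column span, leaving the span computation to the reader. You instead characterize the span structurally: under top-$1$ feedback, $Col(S_k^{\top})=\mathrm{span}\{\mathbf{1},v_o\}$ with $o=\sigma_k^{-1}(1)$, so $\oplus_{k\in[m!]}Col(S_k^{\top})=\mathrm{span}\{\mathbf{1},v_1,\ldots,v_m\}$ is exactly the set of affine functions of $r$, and non-membership reduces to violating the second-difference identity $f(r_1)-f(r_2)-f(r_3)+f(r_4)=0$ for quadruples with $r_1+r_4=r_2+r_3$. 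This test does succeed on the paper's own numbers: for NDCG the quadruple $(000,001,010,011)$ gives $1/2-\log_2 3/(2(1+\log_2 3))\neq 0$; for MAP the same quadruple gives $2/3-5/12=1/4$; for AUC at $m=4$ the quadruple $(0000,0001,0010,0011)$ gives $0-1-1/3+1=-1/3$. Your approach buys a trivially checkable certificate, simultaneously explains why the unnormalized measures pass (Theorem~\ref{globalobservability} is precisely linearity of the loss rows in $r$) while the nonlinear normalizers force failure, and makes explicit the neighboring-pair reduction that the paper leaves implicit (its witness pairs, $\sigma_1$ versus the full reversal, are not neighboring actions, so the telescoping step you describe is indeed needed to match the classification theorem's hypotheses). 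One caveat your ``already for small $m$'' phrasing should absorb: for AUC no violating quadruple exists at $m=3$---the paper notes global observability actually holds there because the normalizer $N(r)$ takes the same value on all relevance vectors with mixed relevances---so the AUC witness must be sought at $m\ge 4$, which your method finds.
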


{\bf Note}: In the NDCG case, allowing the adversary to play multigraded, and not just binary, relevance vectors only makes the adversary more powerful. So the lower bound above continues to apply.

\section{Simulation Results}

We conducted a simulation study to compare regret rate under the popular DCG metric when feedback is received only for top ranked object (by applying Algorithm~\ref{alg:top-1} ) with the case when full relevance vector is revealed at end of each round (by applying Follow the Perturbed Leader of \cite{kalai2005}). Relevance vectors were restricted to take binary values. The reason for choosing DCG is that it is a popular metric used in industry and to empirically confirm that Algorithm\ref{alg:top-1} works for DCG, even though the derivations focused on SumLoss. 

We simulated relevance vectors for a fixed set of 10 objects ($m=10$). We initially fixed half of the objects to be relevant and other half irrelevant, as the true relevance vector.  Then, binary valued relevance vectors for adversary were simulated by adding small Gaussian noise to the true relevance vector. Thus, there was mostly small variation among the relevance vectors, simulating the case that, in real world, majority of users might agree on the relevance of most objects, with small differences. A total of $T=10000$ relevance vectors were generated (simulating number of rounds). 

In Algorithm~\ref{alg:top-1}, since the average of the relevance vectors per block was estimated by uniform sampling according to Lemma~\ref{unbiasedestimator}, the algorithm was run $10$ times, with the same set of relevance vectors, for averaging under the algorithm's randomization. Fig.~\ref{Fig1} shows time-normalized regret with top-1 feedback for DCG. Time-normalized means the cumulative regret upto time $t$ was divided by $t$, for $1 \le t \le T$.  The figure clearly indicates that after the learning phase of the initial few iterations, the learner outputs mostly correct rankings, with the average regret going down to $0$ at rate $O(T^{-1/3})$. 

Fig.~\ref{Fig2} compares time-normalized regret, between top-1 and full information feedback, for DCG. The comparison was done from 1000 iterations onwards, i.e., roughly after the learning phase of the learner. It can be clearly seen that average regret with full information goes down at rate faster ($\Theta(T^{-1/2}) $) than average regret with top-1 feedback ($\Theta(T^{-1/3})$). 

%

\begin{figure}[h]
\begin{center}
\centerline{\includegraphics[height=60mm, width=95mm]{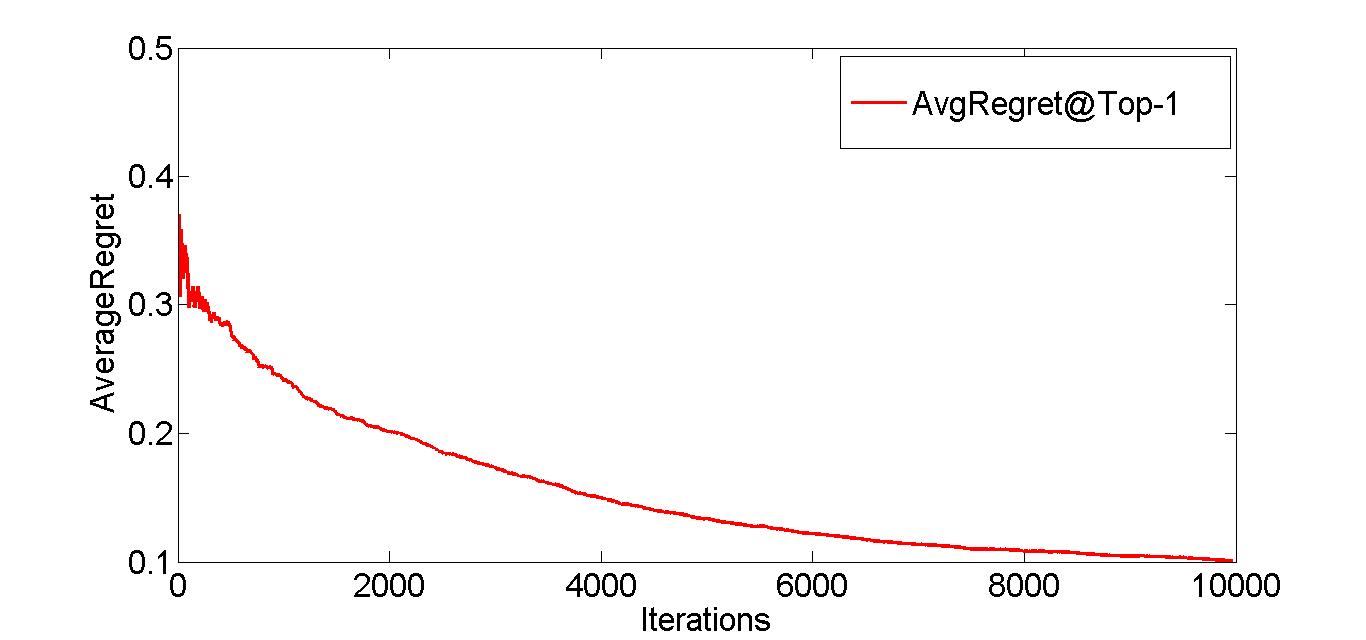}}
\caption{Average regret for DCG with feedback on top ranked object. \emph{Best viewed in color}. } \label{Fig1}
\end{center}
\end{figure} 

\begin{figure}[h]
\begin{center}
\centerline{\includegraphics[height=60mm, width=95mm]{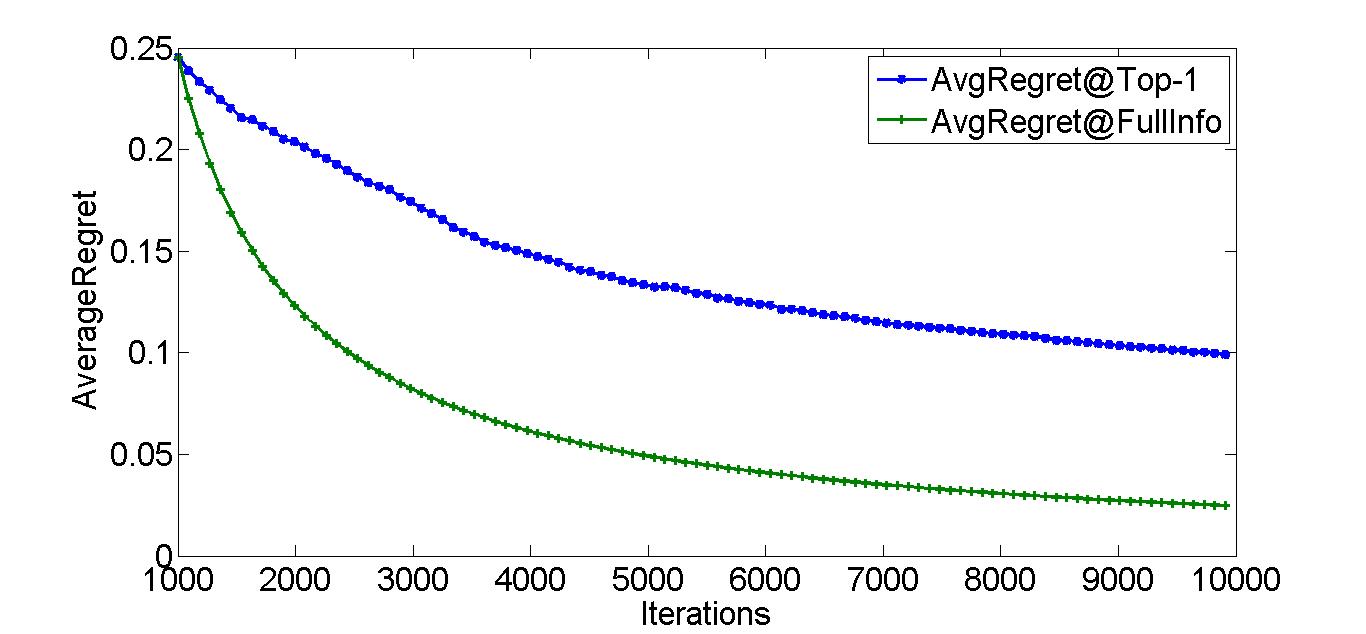}}
\caption{Comparison of average regret over time, for DCG, between top-1 feedback and full relevance vector feedback.  \emph{Best viewed in color. }.}\label{Fig4}
\label{Fig2}
\end{center}
\end{figure} 

\section{Conclusion}
We introduced a novel, interesting feedback model for online ranking of a fixed set of objects for users with diverse preferences. Our results are quite comprehensive as far as the $T$ dependence is concerned. The only exception is Precision@$k$ where the possibility of an $O(T^{1/2})$ regret algorithm remains open. Note that Precision@$k$ is really peculiar since top-$1$ feedback is actually full feedback when $k=1$.

The most interesting future extension of this work is to move beyond ranking of a fixed set of objects and considering different document lists associated with queries. This falls under the category of partial monitoring with side information. Very little relevant work has been done in the general setting and our current work can lay the foundations for interesting application in this field. Another extension is investigating whether an algorithm with sublinear regret can be defined for NDCG, MAP or AUC, when the regret is defined relative to some constant factor (larger than $1$) times the best performance in hindsight.

\section*{Acknowledgement}
The authors acknowledge the support of NSF under grant IIS-1319810.
\newpage

\bibliography{Diversity}

\begin{thebibliography}{17}
\providecommand{\natexlab}[1]{#1}
\providecommand{\url}[1]{\texttt{#1}}
\expandafter\ifx\csname urlstyle\endcsname\relax
  \providecommand{\doi}[1]{doi: #1}\else
  \providecommand{\doi}{doi: \begingroup \urlstyle{rm}\Url}\fi

\bibitem[Agrawal et~al.(2009)Agrawal, Gollapudi, Halverson, and
  Ieong]{agrawal2009}
Rakesh Agrawal, Sreenivas Gollapudi, Alan Halverson, and Samuel Ieong.
\newblock Diversifying search results.
\newblock In \emph{Proceedings of the Second ACM International Conference on
  Web Search and Data Mining}, pages 5--14. ACM, 2009.

\bibitem[Ailon(2014)]{ailon2014}
Nir Ailon.
\newblock Improved bounds for online learning over the permutahedron and other
  ranking polytopes.
\newblock In \emph{Proceedings of the Seventeenth International Conference on
  Artificial Intelligence and Statistics}, pages 29--37, 2014.

\bibitem[Baeza-Yates and Ribeiro-Neto(1999)]{baeza1999}
Ricardo Baeza-Yates and Berthier Ribeiro-Neto.
\newblock \emph{Modern information retrieval}, volume 463.
\newblock ACM Press, 1999.

\bibitem[Bartok et~al.(2014)Bartok, Foster, Pal, Rakhlin, and
  Szepesvari]{bartok2013}
Gabor Bartok, Dean~P. Foster, David Pal, Alexander Rakhlin, and Csaba
  Szepesvari.
\newblock Partial monitoring—classification, regret bounds, and algorithms.
\newblock \emph{Mathematics of Operations Research}, 39\penalty0 (4):\penalty0
  967--997, 2014.

\bibitem[Blum and Mansour(2007)]{nisan2007}
A.~Blum and Y.~Mansour.
\newblock Learning, regret minimization, and equilibria.
\newblock In N.~Nisan, T.~Roughgarden, E.~Tardos, and V.~V. Vazirani, editors,
  \emph{Algorithmic game theory}. Cambridge University Press, 2007.

\bibitem[Cesa-Bianchi et~al.(2006)Cesa-Bianchi, Lugosi, and Stoltz]{cesa2006}
Nicolo Cesa-Bianchi, G{\'a}bor Lugosi, and Gilles Stoltz.
\newblock Regret minimization under partial monitoring.
\newblock \emph{Mathematics of Operations Research}, pages 562--580, 2006.

\bibitem[Cortes and Mohri(2004)]{cortes2004}
Corinna Cortes and Mehryar Mohri.
\newblock Auc optimization vs. error rate minimization.
\newblock \emph{Advances in neural information processing systems}, 16\penalty0
  (16):\penalty0 313--320, 2004.

\bibitem[Duchi et~al.(2010)Duchi, Mackey, and Jordan]{duchi2010}
John~C. Duchi, Lester~W. Mackey, and Michael~I. Jordan.
\newblock On the consistency of ranking algorithms.
\newblock In \emph{Proceedings of the 27th International Conference on Machine
  Learning}, pages 327--334, 2010.

\bibitem[Foster and Rakhlin(2012)]{foster2011}
Dean~P. Foster and Alexander Rakhlin.
\newblock No internal regret via neighborhood watch.
\newblock In \emph{International Conference on Artificial Intelligence and
  Statistics}, pages 382--390, 2012.

\bibitem[J{\"a}rvelin and Kek{\"a}l{\"a}inen(2000)]{jarvelin2000}
Kalervo J{\"a}rvelin and Jaana Kek{\"a}l{\"a}inen.
\newblock {IR} evaluation methods for retrieving highly relevant documents.
\newblock In \emph{Proceedings of the 23rd annual international ACM SIGIR
  conference on Research and development in information retrieval}, pages
  41--48. ACM, 2000.

\bibitem[J{\"a}rvelin and Kek{\"a}l{\"a}inen(2002)]{jarvelin2002}
Kalervo J{\"a}rvelin and Jaana Kek{\"a}l{\"a}inen.
\newblock Cumulated gain-based evaluation of {IR} techniques.
\newblock \emph{ACM Transactions on Information Systems}, pages 422--446, 2002.

\bibitem[Kalai and Vempala(2005)]{kalai2005}
Adam Kalai and Santosh Vempala.
\newblock Efficient algorithms for online decision problems.
\newblock \emph{Journal of Computer and System Sciences}, pages 291--307, 2005.

\bibitem[Lin et~al.(2014)Lin, Abrahao, Kleinberg, and Lui]{lincombinatorial}
Tian Lin, Bruno Abrahao, Robert Kleinberg, and John Lui.
\newblock Combinatorial partial monitoring game with linear feedback and its
  applications.
\newblock In \emph{Proceedings of the 31th International Conference on Machine
  Learning}, pages 901--909. ACM, 2014.

\bibitem[Liu et~al.(2007)Liu, Xu, Qin, Xiong, and Li]{liu2007}
Tie-Yan Liu, Jun Xu, Tao Qin, Wenying Xiong, and Hang Li.
\newblock Letor: Benchmark dataset for research on learning to rank for
  information retrieval.
\newblock In \emph{Proceedings of SIGIR 2007 workshop on learning to rank for
  information retrieval}, pages 3--10, 2007.

\bibitem[Piccolboni and Schindelhauer(2001)]{piccolboni2001}
Antonio Piccolboni and Christian Schindelhauer.
\newblock Discrete prediction games with arbitrary feedback and loss.
\newblock In \emph{Computational Learning Theory}, pages 208--223. Springer,
  2001.

\bibitem[Radlinski et~al.(2008)Radlinski, Kleinberg, and
  Joachims]{radlinski2008}
Filip Radlinski, Robert Kleinberg, and Thorsten Joachims.
\newblock Learning diverse rankings with multi-armed bandits.
\newblock In \emph{Proceedings of the 25th International conference on Machine
  learning}, pages 784--791. ACM, 2008.

\bibitem[Radlinski et~al.(2009)Radlinski, Bennett, Carterette, and
  Joachims]{radlinski2009}
Filip Radlinski, Paul~N Bennett, Ben Carterette, and Thorsten Joachims.
\newblock Redundancy, diversity and interdependent document relevance.
\newblock In \emph{ACM SIGIR}, pages 46--52. ACM, 2009.

\end{thebibliography}


\begin{thebibliography}{99}}
\newcommand{\ebibl}{\end{thebibliography}
\bibliographystyle{plainnat}

\newpage

We provide missing proofs of theorems and extensions that were excluded from the main body of the paper due to space constraints.

\section{Regret for SumLoss}

\subsection{Proof of Lemma~\ref{pareto-optimal}}
\begin{proof}
For any $p \in \Delta$, we have $\ell_i \cdot p= \sum_{j=1}^{2^m}\  p_j \ (\sigma_i \cdot r_j) = \sigma_i \cdot (\sum_{j=1}^{2^m} p_j r_j)= \sigma_i \cdot E_{r}[r]$, where the expectation is taken w.r.t. $p$ ($p_j$ is the $j$th component of $p$). By dot product rule between 2 vectors, $l_i \cdot p$ is minimized when ranking of objects according to $\sigma_i$ and expected relevance of objects are in opposite order. That is, the object with highest expected relevance is ranked $1$ and so on. Formally, $l_i \cdot p$ is minimized when $E_{r}[r(\sigma_i^{-1}(1)] \ge E_{r}[r(\sigma_i^{-1}(2)] \ge \ldots \ge E_{r}[r(\sigma_i^{-1}(m)]$. 

Thus, for action $i$, probability cell is defined as $C_i= \{p \in \Delta: \sum_{j=1}^{2^m} p_j =1, \ E_{r}[r(\sigma_i^{-1}(1)] \ge E_{r}[r(\sigma_i^{-1}(2)] \ge \ldots \ge E_{r}[r(\sigma_i^{-1}(m)]\}$. Note that, $p \in C_i$ iff action $i$ is optimal w.r.t. $p$.  Since $C_i$ is obviously non-empty and it has only 1 equality constraint (hence $2^m -1$ dimensional), action $i$ is Pareto optimal. 

The above holds true for all learner's actions $i$. 
\end{proof}

\subsection{Proof of Lemma~\ref{neighbor-actions}}

\begin{proof}
From Lemma \ref{pareto-optimal}, we know that every one of learner's actions is Pareto-optimal and $C_i$, associated with action $\sigma_i$, has structure $C_i= \{p \in \Delta: \sum_{j=1}^{2^m} p_j =1, \ E_{r}[r(\sigma_i^{-1}(1)] \ge E_{r}[r(\sigma_i^{-1}(2)] \ge \ldots > E_{r}[r(\sigma_i^{-1}(m)]\}$.

Let $\sigma_i^{-1}(k)= a, \ \sigma_i^{-1}(k+1)=b$. Let it also be true that $\sigma_j^{-1}(k)= b, \ \sigma_j^{-1}(k+1)=a$ and $\sigma_i^{-1}(n)= \sigma_j^{-1}(n), \ \forall n \neq \{k, \ k+1\}$. Thus, objects in $\{\sigma_i,\sigma_j\}$ are same in all places except in a pair of consecutive places where the objects are interchanged.

Then, $C_i \cap C_j= \{p \in \Delta: \sum_{j=1}^{2^m} p_j =1, \ E_{r}[r(\sigma_i^{-1}(1)] \ge \ldots \ge E_{r}[r(\sigma_i^{-1}(k)] =  E_{r}[r(\sigma_i^{-1}(k+1)] \ge \ldots \ge E_{r}[r(\sigma_i^{-1}(m)]\}$. Hence, there are two equalities in the non-empty set $C_i \cap C_j$ and it is an $(2^m -2)$ dimensional polytope. Hence condition of Definition 4 holds true and $\{\sigma_i,\sigma_j\}$ are neighboring actions pair.

\end{proof}

\subsection{Proof of Theorem~\ref{localobservability}}

\begin{proof}
We will explicitly show that local observability condition fails by considering the case when number of objects is $m=3$. Specifically, action pair \{$\sigma_1, \ \sigma_2$\}, in Table \ref{lossmatrix-table} are neighboring actions, using Lemma \ref{neighbor-actions} . Now every other action $\{\sigma_3,\sigma_4,\sigma_5,\sigma_6\}$ either places object 2 at top or object 3 at top. It is obvious that the set of probabilities for which $E_r [r(1)]\ge E_r[r(2)]=E_r[r(3)]$ cannot be a subset of any $C_3,C_4,C_5, C_6$. From Def. 4, the neighborhood action set of actions $\{\sigma_1, \sigma_2\}$ is precisely $\sigma_1$ and $\sigma_2$ and contains no other actions.
By definition of signal matrices $S_{\sigma_1}, \ S_{\sigma_2}$ and entries $\ell_1, \ \ell_2$ in Table \ref{lossmatrix-table} and \ref{feedbackmatrix-table}, we have,
\begin{equation}
\begin{aligned}
 S_{\sigma_1}= S_{\sigma_2}=
  \left[ {\begin{array}{cccccccc}
   1 & 1 & 1 & 1 & 0 & 0 & 0 & 0 \\
   0 & 0 & 0 & 0 & 1 & 1 & 1 & 1 \\
  \end{array} } \right]\\
\ell_1 - \ell_2 = 
 \left[ {\begin{array}{cccccccc}
   0 & 1 & -1 & 0 & 0 & 1 & -1 & 0 \\
  \end{array} } \right] .
\end{aligned}
\end{equation}
It is clear that $\ell_1 - \ell_2 \notin Col(S^{\top}_{\sigma_1})$. Hence, Definition 5 fails to hold.
\end{proof}

\section{Efficient Algorithm for Obtaining Regret}

\subsection{Proof of Lemma~\ref{unbiasedestimator}}
\begin{proof}
We can write $\hat{r}_{t}= \sum_{j=1}^m r_{i_j}(j)e_j$, where $e_j$ is the standard basis vector along coordinate $j$. Then $E_{i_1,\ldots,i_m}(\hat{r}_{t})= \sum_{j=1}^m E_{i_j}(r_{i_j}(j)e_j)= \sum_{j=1}^{m} \sum_{k=1}^{t} \dfrac{r_{k}(j)e_j}{t}$= $r_{1:t}^{avg}$.
\end{proof}

\subsection{Proof of Theorem~\ref{efficientregret}}

\begin{proof}


The proof for the top-$1$ feedback needs a careful look at the analysis of FTPL when we divide time into phases/blocks.

\paragraph{FTPL with blocking.} Instead of top-1 feedback, assume that at each round, after learner reveals his action, the full relevance vector is revealed to the learner. Then an $O(\sqrt{T})$ expected regret for $SumLoss$ can be obtained by applying FTPL (Follow the perturbed leader), in the following manner.

At end of every round t, the full relevance vector generated by the adversary is revealed. The relevance vectors are accumulated as $r_{1:t}= r_{1:t-1}+r_t$, where $r_{1:s}= \sum_{i=1}^s r_i$. A learner's action (permutation) for round $t+1$ is generated by solving $M(r_{1:t}+p_t)$, where $p_t \in [0,\frac{1}{\epsilon}]^m$ (uniform distribution) and $\epsilon$ is algorithmic (randomization) parameter. It should be noted that $M(y)= \underset{\sigma}{\argmin} \ \sigma \cdot y$ is simply sorting of $y$ since $f(\sigma)= \sigma$ is a monotone function as defined in Sec.~\ref{rankingmeasures} . 

The key idea is that FTPL implicitly maintains a distribution over $m!$ actions (permutations) at beginning of each round, by randomly perturbing the scores of only $m$ objects: score of each object is sum of (deterministic) accumulated relevance so far and (random) uniform value from $[0,\frac{1}{\epsilon}]$. Thus, it bypasses having to maintain explicit weight on each of $m!$ arms, which is computationally prohibitive. This key property which introduces efficiency in our algorithm is in contrast to the general algorithms based on exponential weights, which have to maintain explicit weights, based on accumulated history, on each action and randomly select an action based on weights.

Now let us look at a variant of the full information problem. The (known) time horizon $T$ is divided into $K$ blocks, i.e., $\{B_1,\ldots,B_K\}$, of equal size $\floor{T/K}$. Here, $B_i= \{(i-1)(T/K)+1, (i-1)(T/K)+2, (i-1)T/K +3, \ldots, i (T/K)\}$. While operating in a block, the relevance vectors within the block are accumulated, but not used to generate learner's actions like in the full information version. Assume at the start of block $B_i$, there was some relevance vector $r^{i}$. Then at each time point in the block, a fresh $p \in [0,\eta]^m$ is sampled and $M(r^i+p)$ is solved to generate permutation for next time point. At the end of a block, the average of the accumulated relevance vectors ($r^{avg}$) for the block is used for updation, as $r^{i} + r^{avg}$, to get $r^{i+1}$ for the next block. The process is repeated for each block. At the beginning of the first block, $r^{1}=\{0\}^m$. 

Formally, let the FTPL have an implicit distribution $\rho_i$ (over the permutations) at the beginning of block $B_i$. That is $ \rho_i \in \Delta$, where $\Delta$ is the probability simplex over $m!$ actions.  Sampling a permutation using $\rho_i$ at each time point of the block $B_i$ means sampling a fresh $p\in [0,\eta]^m$  at every time point $t$ and solving $M(s_{1:(i-1)} + p)$, where  $s_{1:(i-1)}= \sum_{j=1}^{i-1} s_j$ and $s_j$ is the average of relevance vectors of block $B_j$. 
Note that the distribution $\rho_i$ is a fixed, deterministic function of the vectors $s_1,\ldots,s_{i-1}$.

Since action $\sigma_t$, for $t \in B_k$, is generated according to distribution $\rho_k$ (we will denote this as $\sigma_t \sim \rho_k$), and in block $k$, distribution $\rho_k$ is fixed, we have
\begin{multline*}
E_{\sigma_t \sim \rho_k} [ \sum_{t \in [B_k]} SumLoss(\sigma_t,r_t) ]= \\
\sum_{t \in B_k} \rho_k \cdot [SumLoss(\sigma_1,r_t), \ldots, SumLoss(\sigma_{m!},r_t)] .
\end{multline*}

(dot product between 2 vectors of length $m!$).

Thus, the total expected loss of this variant of the full information problem is:
\begin{align}
\label{eq:regret1}
\notag &  \sum_{t=1}^T E_{\sigma_t\sim\rho_k} [ SumLoss(\sigma_t,r_t) ] = \\
&  \sum_{k=1}^K E_{\sigma_t \sim \rho_k}[ \sum_{t \in B_k} SumLoss(\sigma_t,r_t) ]\\
\notag & = \sum_{k=1}^K \sum_{t \in B_k} \rho_k \cdot [SumLoss(\sigma_1,r_t), \ldots, SumLoss(\sigma_{m!},r_t)]\\
\notag & =  \sum_{k=1}^K \sum_{t \in B_k} \rho_k \cdot [\sigma_1 \cdot r_t, \ldots, \sigma_{m!} \cdot r_t)] \ \text{ By defn. of SumLoss}\\
\notag & = \dfrac{T}{K} \sum_{k=1}^K \rho_k \cdot [\sigma_1 \cdot s_k, \ldots, \sigma_{m!}\cdot s_k]\\
\notag &= \dfrac{T}{K} \sum_{k=1}^K E_{\sigma_k \sim \rho_k}[ SumLoss(\sigma_k,s_k) ]\\
& = \dfrac{T}{K} E_{\sigma_1\sim\rho_1,\ldots,\sigma_K\sim \rho_K} \sum_{k=1}^K SumLoss(\sigma_k,s_k) 
\end{align}
where $s_k = \sum_{t \in B_k} \dfrac{r_t}{T/K}$.
Note that, at end of every round $k \in [K]$, $\rho_k$ is updated to $\rho_{k+1}$ by feeding $s_{1:k}$ to FTPL.
By the regret bound of FTPL, for $K$ rounds of full information problem, with $\epsilon=\sqrt{D/RAK}$, we have:
\begin{equation}
\label{eq:regret2}
\begin{aligned}
 & E_{\sigma_1\sim\rho_1,\ldots,\sigma_K\sim\rho_K} \sum_{k=1}^K SumLoss(\sigma_k,s_k)  \\ 
& \le \underset{\sigma}{\min} \sum_{k=1}^K SumLoss(\sigma ,s_k)  + 2 \sqrt{DRAK}\\
& = \underset{\sigma}{\min} \sum_{k=1}^K \sigma \cdot s_k + 2 \sqrt{DRAK} \\
& = \underset{\sigma}{\min} \sum_{t=1}^T \sigma \cdot \dfrac{r_t}{T/K} + 2 \sqrt{DRAK}\\
\end{aligned}
\end{equation}
where $D,R,A$ are parameters dependent on the loss under consideration, that we will discuss and set later. 

Now, since
$$\underset{\sigma}{\min}\ {\sum_{t=1}^T \sigma \cdot \dfrac{r_t}{T/K}}= \underset{\sigma}{\min}\ \dfrac{1}{T/K} \sum_{t=1}^T SumLoss(\sigma,r_t),$$
combining Eq. \ref{eq:regret1} and Eq. \ref{eq:regret2}, we get:
\begin{equation}
\label{eq:regret3}
\begin{aligned}
&\sum_{t=1}^T E_{\sigma_t \in \rho_k} [ SumLoss(\sigma_t,r_t) ]\\
& \le \underset{\sigma}{\min} \sum_{t=1}^T SumLoss(\sigma,r_t)+ 2 \dfrac{T}{K} \sqrt{DRAK}.
\end{aligned}
\end{equation}

\paragraph{FTPL with blocking and top-1 feedback.}
However, in our top-1 feedback model, the learner does not get to see the full relevance vector at each round.  Thus, we form the unbiased estimator $\hat{s}_k$ of $s_k$, using Lemma~\ref{unbiasedestimator}. That is, at start of each block, we choose $m$ time points uniformly at random, and at those time points, we output a random permutation which places each object, in turn, at top. At the end of the block, we form the relevance vector $\hat{s}_k$ which is the unbiased estimator of $s_k$. Note that using $\hat{s}_k$ instead of true $s_k$ makes the distributions $\rho_k$ themselves random.
But significantly, $\rho_k$ is dependent only on information received upto the beginning of block $k$ and is independent of the information collected in the block.  Thus, for block $k$, we have:
\begin{align*}
& E_{\sigma_t \sim \rho_k({\hat{s}_1,\hat{s}_2,..,\hat{s}_{k-1}})} \sum_{t \in [B_k]} SumLoss(\sigma_t,r_t)\\
& = \dfrac{T}{K} E_{\sigma_k \sim \rho_k({\hat{s}_1,\hat{s}_2,..,\hat{s}_{k-1}})} SumLoss(\sigma_k,s_k) \\
&\quad \text{(From Eq.~\ref{eq:regret1})}\\ 
& = \dfrac{T}{K} E_{\sigma_k\sim \rho_k({\hat{s}_1,\hat{s}_2,..,\hat{s}_{k-1}})} E_{\hat{s}_k}SumLoss(\sigma_k,\hat{s}_k) \\
& \quad (\because \text{SumLoss is linear in }s \text{ and $\hat{s}_k$ is unbiased)} \\
& = \dfrac{T}{K} E_{\hat{s}_k} E_{\sigma_k\sim\rho_k({\hat{s}_1,\hat{s}_2,..,\hat{s}_{k-1}})} SumLoss(\sigma_k,\hat{s}_k) .
\end{align*}
In the last step above, we crucially used the fact that, since random distribution $\rho_k$ is independent of $\hat{s}_k$, the order of expectations is interchangeable. Taking expectation w.r.t.
$\hat{s}_1,\hat{s}_2,..,\hat{s}_{k-1}$, we get,
\begin{align*}
& E_{\hat{s}_1,\ldots, \hat{s}_{k-1}} E_{\sigma_t \sim \rho_k({\hat{s}_1,\hat{s}_2,..,\hat{s}_{k-1}})} \sum_{t \in [B_k]} SumLoss(\sigma_t,r_t) =\\
& \dfrac{T}{K} E_{\hat{s}_1,\ldots, \hat{s}_{k-1}, \hat{s}_k} E_{\sigma_t \sim \rho_k({\hat{s}_1,\hat{s}_2,..,\hat{s}_{k-1}})} SumLoss(\sigma_k,\hat{s}_k) .
\end{align*}

Thus, 
\begin{align*}
& \E \sum_{t=1}^T SumLoss(\sigma_t,r_t) = \E \sum_{k=1}^K \sum_{t \in [B_k]} SumLoss(\sigma_t,r_t)\\
& \sum_{k=1}^K E_{\hat{s}_1,\ldots, \hat{s}_{k-1}} E_{\sigma_t \sim \rho_k({\hat{s}_1,\hat{s}_2,..,\hat{s}_{k-1}})}\sum_{t \in [B_k]} SumLoss(\sigma_t,r_t)\\
&  \dfrac{T}{K} \sum_{k=1}^K E_{\hat{s}_1,\ldots, \hat{s}_{k-1}, \hat{s}_k} E_{\sigma_t \sim \rho_k({\hat{s}_1,\hat{s}_2,..,\hat{s}_{k-1}})} SumLoss(\sigma_k,\hat{s}_k) \\
& = \dfrac{T}{K} E_{\hat{s}_1,\ldots, \hat{s}_K} \sum_{k=1}^K E_{\sigma_t \sim \rho_k({\hat{s}_1,\hat{s}_2,..,\hat{s}_{k-1}})} SumLoss(\sigma_k,\hat{s}_k) 
\end{align*}

Now using Eq.~\ref{eq:regret2}, we can upper bound the last term above as
\begin{align*}
& \le \dfrac{T}{K}\{ E_{\hat{s}_1,\ldots,\hat{s}_K}[\underset{\sigma}{\min} \sum_{k=1}^K \sigma\cdot \hat{s}_k ]+ 2 \sqrt{DRAK}\} \\
& \le\dfrac{T}{K}\{ \underset{\sigma}{\min} \sum_{k=1}^K \sigma\cdot s_k + 2 \sqrt{DRAK}\} \\
&\quad \text{(Jensen's \ Inequality)}\\
& \le \underset{\sigma}{\min} \sum_{t=1}^T \sigma \cdot r_t + 2 \dfrac{T}{K} \sqrt{DRAK}\\
& = \underset{\sigma}{\min} \sum_{t=1}^T SumLoss(\sigma, r_t)+ 2 \dfrac{T}{K} \sqrt{DRAK} .
\end{align*}
However, since in each block $B_k$, $m$ rounds are reserved for exploration, where we do not draw $\sigma_t$ from distribution $\rho_k$, the total expected loss is higher. Exploration leads to an extra regret of $RmK$, where $R$, as has been stated before, is an implicit parameter depending on the loss under consideration. The extra regret is because loss in each of the exploration rounds is at most $R$ and there are a total of $mK$ exploration rounds over all $K$ blocks. Thus, overall regret is larger by $RmK$ giving us:
\begin{align*}
&  E\left[ \sum_{t=1}^T SumLoss(\sigma_t,r_t) \right]- \underset{\sigma}{\min} \sum_{t=1}^T SumLoss(\sigma, r_t) \\
& \le RmK + 2\dfrac{T}{K} \sqrt{DRAK} .
\end{align*}
Now we optimize over $K$ and set $K= (DA/R)^{1/3} (T/m)^{2/3}$, to get:
\begin{equation}
\begin{aligned}
\label{eq:mainforregret}
& E\left[\sum_{t=1}^T SumLoss(\sigma_t,r_t)\right] \le   \underset{\sigma}{\min}\sum_{t=1}^T SumLoss(\sigma,r_t) \\
& +  O(m^{1/3}R^{2/3}(DA)^{1/3}T^{2/3})
\end{aligned}
\end{equation}
Now, we recall the definitions of $D$, $R$ and $A$ from \cite{kalai2005}: $D$ is an upper bound on the $\ell_1$ norm of vectors in learner's action space, $R$ is an upper bound on the dot product of vectors in learner's and adversary's action space, and $A$ is an upper bound on the $\ell_1$ norm on vectors in adversary's action space. Thus, for $SumLoss$, we have
\begin{align*}
D &= \sum_{i=1}^m \sigma(i)= O(m^2), \\
R &= \sum_{i=1}^m \sigma(i) r(i) = O(m^2), \\
A &= \sum_{i=1}^m r(i)= O(m) .
\end{align*}
Plugging in these values gives us Theorem~\ref{efficientregret}.
\end{proof}

\section{Regret Bounds for DCG and Prec@k}

We deal with DCG first followed by Prec@k.

\subsection{Extension of Results of SumLoss to DCG}
We give pointers in the direction of proving the following results: a) Local observability condition fails to hold for DCG, b)The efficient algorithm of Sec.\ref{efficientalgorithm} applies to DCG, with regret of $O(T^{2/3})$. Thus, the minimax regret of DCG is $\Theta(T^{2/3})$. All results are applicable to non-binary relevance vectors. The application of Algorithm~\ref{alg:top-1} allows us to skip the proof of global observability, which is complicated for non-binary relevance vectors.

Let adversary be able to choose $r \in \{0,1,\ldots,n\}^m$. Then, from definition of $DCG$ in Sec.\ref{rankingmeasures} , it is clear DCG=$f(\sigma) \cdot g(r)$.  $f(\sigma)$ and $g(r)$ has already been defined for DCG. Both are composed of $m$ copies of univariate, monotonic, scalar valued function, where for $f(\cdot)$, it is monotonically decreasing whereas for $g(\cdot)$, it is increasing.

With slight abuse of notations, the loss matrix $L$ implicitly means gain matrix, where entry in cell $\{i,j\}$ of $L$ is $f(\sigma_i) \cdot g(r_j)$. The feedback matrix $H$ remains the same. In Definition 1, learner action $i$ is optimal if $\ell_i \cdot p \ge \ell_j \cdot p, \ \forall j \neq i$.

In Definition 2, the maximum number of distinct elements that can be in a row of $H$ is $n+1$. The signal matrix now becomes $S_i \in \{0,1\}^{(n+1) \times 2^m}$, where $(S_i)_{k,\ell}= \mathbbm{1}(H_{i,\ell}= k-1)$. 

\subsubsection{Local Observability Fails}

Since we are trying to establish a lower bound, it is sufficient to show it for binary relevance vectors. 

In Lemma~\ref{pareto-optimal}, proved for SumLoss, $\ell_i \cdot p$ equates to $f(\sigma) \cdot E_r [r]$. From definition of DCG, and from the structure and properties of $f(\cdot)$, it is clear that $\ell_i \cdot p$ is \emph{maximized} under the same condition, i.e, $E_{r}[r(\sigma_i^{-1}(1)] \ge E_{r}[r(\sigma_i^{-1}(2)] \ge \ldots \ge E_{r}[r(\sigma_i^{-1}(m)]$. Thus, all actions are Pareto-optimal.

Careful observation of Lemma~\ref{neighbor-actions} shows that it is directly applicable to DCG, in light of extension of Lemma~\ref{pareto-optimal} to DCG.

Finally, just like in SumLoss, simple calculations with $m=3$ and $n=1$, in light of Lemma~\ref{pareto-optimal} and \ref{neighbor-actions}, show that local observability condition fails to hold.

We show the calculations:
\begin{equation*}
\begin{aligned}
 S_{\sigma_1}= S_{\sigma_2}=
  \left[ {\begin{array}{cccccccc}
   1 & 1 & 1 & 1 & 0 & 0 & 0 & 0 \\
   0 & 0 & 0 & 0 & 1 & 1 & 1 & 1 \\
  \end{array} } \right]\\
\end{aligned}
\end{equation*}

\begin{equation*}
\begin{aligned}
\ell_{\sigma_1}= & [0, 1/2, 1/\log_2 3, 1/2 + 1/\log_2 3, 1, 3/2, \\
& 1 + 1/\log_2 3, 3/2 + 1/\log_2 3]\\
\ell_{\sigma_2}= & [0, 1/\log_2 3, 1/2, 1/2 + 1/\log_2 3, 1, 1 + 1/\log_2 3,\\
& 3/2, 3/2 + 1/\log_2 3]
\end{aligned}
\end{equation*}

It is clear that $\ell_1 - \ell_2 \notin Col(S^{\top}_{\sigma_1})$. Hence, Definition 5 fails to hold.

\subsubsection{ Implementation of the Efficient Algorithm}

The only change in Algorithm~\ref{alg:top-1} that allows extension to DCG with non-binary relevance is that relevance values will enter into the algorithm via the transformation $g^s(\cdot)$. That is, every component of relevance vector $r$, i.e., $r(i)$, will become $2^{r(i)}-1$. Every operation of Algorithm~\ref{alg:top-1} will occur on the transformed relevance vectors. It is very easy to see that every step in analysis of the algorithm will be valid by just considering the transformed relevance vectors to be some new relevance vectors with magnified relevance values. The fact that $r$ was binary valued in  $SumLoss$ played no role in the analysis of the algorithm or Lemma~\ref{unbiasedestimator}. The properties which allowed the extension was that $g(\cdot)$ is composed of univariate, monotonic, scalar valued functions and $DCG(\sigma,r)$ is a linear function of $f(\sigma)$ and $g(r)$. 

It is also interesting to note that $M(y)= \underset{\sigma}{\argmax}\ f(\sigma) \cdot y= \underset{\sigma}{\argmin}\ \sigma \cdot y$.  Thus, no changes in the algorithm is required, other than simple transformation of relevance values.

\subsubsection{Proof of Theorem~\ref{efficientregretforDCG}}

Following the proof of Theorem~\ref{efficientregret}, modified for DCG, Eq.\ref{eq:mainforregret} gives (for DCG):
\begin{multline*}
E[\sum_{t=1}^T DCG(\sigma_t,r_t)] \ge \underset{\sigma}{\max}\sum_{t=1}^T DCG(\sigma,r_t) \\
 - O(m^{1/3}R^{2/3}(DA)^{1/3}T^{2/3}) .
\end{multline*}

For DCG, $D= \sum_{i=1}^m f^s(\sigma(i))= O(m), R= \sum_{i=1}^m f^s(\sigma(i))g^s(r(i))= O(m(2^n-1)), A= \sum_{i=1}^m g^s(r(i))= O(m(2^n-1))$ and hence the regret is $O((2^n-1)m^{5/3}T^{2/3 })$.


\subsection{Extension of Results of SumLoss to Prec@k}

Since Prec@$k = f(\sigma) \cdot r$, with $f(\cdot)$ having properties enlisted in Sec.~\ref{rankingmeasures}, all results of SumLoss trivially extend to Prec@$k$, except results on local observability. The reason is that while $f(\cdot)$ of SumLoss is strictly monotonic, $f(\cdot)$ of Prec@$k$ is monotonic but not strict. The gain function depends only on the objects in the top-$k$ position of the ranked list, irrespective of the order. A careful analysis shows that Lemma~\ref{neighbor-actions} fails to extend to the case of Prec@$k$. Thus, we cannot define the neighboring action set of the Pareto optimal action pairs, and hence cannot prove or disprove local observability. The structure of neighbors in Prec@$k$ remains an open question. 

However, the non-strict monotonicity of Prec@k is required for solving $M(y)= \underset {\sigma}{\argmax}\ f(\sigma) \cdot y$ efficiently.

\subsubsection{Proof of Theorem \ref{efficientregretforPrec@k}} 

Following the proof of Theorem \ref{efficientregret}, modified for Prec@$k$, Eq.\ref{eq:mainforregret} gives (for Prec@$k$):
\begin{multline*}
E[\sum_{t=1}^T Prec@k(\sigma_t,r_t)] \ge   \underset{\sigma}{\max}\sum_{t=1}^T Prec@k(\sigma,r_t) \\
 - O(m^{1/3}R^{2/3}(DA)^{1/3}T^{2/3}) .
\end{multline*}

For Prec@$k$, $D= \sum_{i=1}^k f^s(\sigma(i))= O(k), R= \sum_{i=1}^m f^s(\sigma(i))g^s(r(i))= O(k), A= \sum_{i=1}^m r(i)= O(m)$ and hence the regret is $O(km^{2/3}T^{2/3})$.

\section{Non-existence of Sublinear Regret Bounds for NDCG, MAP and AUC}

We show via simple calculations that for the case $m=3$, global observability condition fails to hold for NDCG, when relevance vectors are restricted to take binary values. 
The intuition behind failure to satisfy global observability condition is that the $NDCG(\sigma,r)= f(\sigma) \cdot g(r)$, where where $g(r)= r/ Z(r)$ (See Sec.\ref{rankingmeasures} ). Thus, $g(\cdot)$ cannot be by univariate, scalar valued functions. This makes it impossible to write the difference between two rows as linear combination of columns of (transposed) signal matrices.

\subsection{Proof of Lemma \ref{globalfailsfornormalized}}

\begin{proof}
We will first consider NDCG and then, MAP and AUC.

{\bf NDCG}:

The first and last row of Table \ref{lossmatrix-table}, when calculated for NDCG, are:
\begin{equation*}
\begin{aligned}
\ell_{\sigma_1}= & [1, 1/2, 1/\log_2 3, (1+\log_2 3/2))/(1+\log_2 3), 1, \\
& 3/(2(1+1/\log_2 3)), 1, 1]\\
\ell_{\sigma_6}= & [1, 1, \log_2 2/\log_2 3, 1, 1/2, 3/(2(1+1/\log_2 3)), \\
& (1+(\log_2 3)/2))/(1+\log_2 3), 1]
\end{aligned}
\end{equation*}

We remind once again that NDCG is a gain function, as opposed to SumLoss. 

The difference between the two vectors is:
\begin{equation*}
\begin{aligned}
\ell_{\sigma_1} - \ell_{\sigma_6}=& [0, -1/2, 0, -\log_2 3/(2(1+\log_2 3)), \\
&1/2, 0, \log_2 3/(2(1+\log_2 3)), 0].
\end{aligned}
\end{equation*} 
 
The signal matrices are same as SumLoss:

\begin{equation*}
\begin{aligned}
 S_{\sigma_1}= S_{\sigma_2}=
  \left[ {\begin{array}{cccccccc}
   1 & 1 & 1 & 1 & 0 & 0 & 0 & 0 \\
   0 & 0 & 0 & 0 & 1 & 1 & 1 & 1 \\
  \end{array} } \right]\\
\\
S_{\sigma_3}= S_{\sigma_5}=
  \left[ {\begin{array}{cccccccc}
   1 & 1 & 0 & 0 & 1 & 1 & 0 & 0 \\
   0 & 0 & 1 & 1 & 0 & 0 & 1 & 1 \\
  \end{array} } \right]\\
\\
S_{\sigma_4}= S_{\sigma_6}=
  \left[ {\begin{array}{cccccccc}
   1 & 0 & 1 & 0 & 1 & 0 & 1 & 0 \\
   0 & 1 & 0 & 1 & 0 & 1 & 0 & 1 \\
  \end{array} } \right]\\
\end{aligned}
\end{equation*}

It can now be easily checked that $\ell_{\sigma_1} - \ell_{\sigma_6}$ does not lie in the (combined) column span of the (transposed) signal matrices.

We show similar calculations for MAP and AUC.

{\bf MAP}:

We once again take $m=3$. The first and last row of Table \ref{lossmatrix-table}, when calculated for MAP, is:
\begin{equation*}
\begin{aligned}
&\ell_{\sigma_1}= [1, 1/3, 1/2, 7/12, 1, 5/6, 1, 1]\\
&\ell_{\sigma_6}= [1, 1, 1/2, 1, 1/3, 5/6, 7/12, 1]
\end{aligned}
\end{equation*}

Like NDCG, MAP is also a gain function.

The difference between the two vectors is:
\begin{equation*}
\ell_{\sigma_1} - \ell_{\sigma_6}= [0, -2/3, 0, -5/12, 2/3, 0, 5/12, 0].
\end{equation*} 
 
The signal matrices are same as in the SumLoss case:
\begin{equation*}
\begin{aligned}
 S_{\sigma_1}= S_{\sigma_2}=
  \left[ {\begin{array}{cccccccc}
   1 & 1 & 1 & 1 & 0 & 0 & 0 & 0 \\
   0 & 0 & 0 & 0 & 1 & 1 & 1 & 1 \\
  \end{array} } \right]\\
\\
S_{\sigma_3}= S_{\sigma_5}=
  \left[ {\begin{array}{cccccccc}
   1 & 1 & 0 & 0 & 1 & 1 & 0 & 0 \\
   0 & 0 & 1 & 1 & 0 & 0 & 1 & 1 \\
  \end{array} } \right]\\
\\
S_{\sigma_4}= S_{\sigma_6}=
  \left[ {\begin{array}{cccccccc}
   1 & 0 & 1 & 0 & 1 & 0 & 1 & 0 \\
   0 & 1 & 0 & 1 & 0 & 1 & 0 & 1 \\
  \end{array} } \right]\\
\end{aligned}
\end{equation*}

It can now be easily checked that $\ell_{\sigma_1} - \ell_{\sigma_6}$ does not lie in the (combined) column span of the (transposed) signal matrices.
\end{proof}

{\bf AUC}:

For AUC, we will show the calculations for $m=4$. This is because global observability does hold with $m=3$, as the normalizing factors for all relevance vectors with non-trivial mixture of $0$ and $1$ are same (i.e, when relevance vector has 1 irrelevant and 2 relevant objects, and 1 relevant and 2 irrelevant objects, the normalizing factors are same).  The normalizing factor changes from $m=4$ onwards; hence global observability fails.

Table~\ref{lossmatrix-table} will be extended since $m=4$. Instead of illustrating the full table, we point out the important facts about the loss matrix table with $m=4$ for AUC.

The $2^4$ relevance vectors heading the columns are:

$r_1= 0000,\ r_2= 0001,\ r_3=0010,\ r_4=0100,\ r_5=1000,\ r_6=0011,\ r_7=0101,\ r_8=1001,\ r_9=0110,\ r_{10}=1010,\ r_{11}=1100,\ r_{12}=0111,\ r_{13}=1011,\ r_{14}=1101,\ r_{15}=1110,\ r_{16}=1111$.

We will calculate the losses  of 1st and last (24th) action, where $\sigma_1= 1234$ and $\sigma_{24}=4321$.

\begin{equation*}
\begin{aligned}
&\ell_{\sigma_1}= [0, 1, 2/3, 1/3, 0, 1, 3/4, 1/2, 1/2, 1/4, 0, 1, 2/3, 1/3, 0, 0]\\
&\ell_{\sigma_{24}}= [0, 0, 1/3, 2/3, 1, 0, 1/4, 1/2, 1/2, 3/4, 1, 0, 1/3, 2/3, 1, 0]
\end{aligned}
\end{equation*}

AUC, like SumLoss, is a loss function.

The difference between the two vectors is:
\begin{equation*}
\begin{aligned}
& \ell_{\sigma_1} - \ell_{\sigma_{24}}= \\
& [0, 1, 1/3, -1/3, -1, 1, 1/2, 0, 0, -1/2, -1, 1, 1/3, -1/3, -1, 0].
\end{aligned}
\end{equation*} 

The signal matrices for AUC with $m=4$ will be slightly different. This is because there are 24 signal matrices, corresponding to 24 actions. However, groups of 6 actions will share the same signal matrix. For example, all 6 permutations that place object 1 first will have same signal matrix, all 6 permutations that place object 2 first will have same signal matrix, and so on. For simplicity, we denote the signal matrices as $S_1, S_2, S_3, S_4$, where $S_i$ corresponds to signal matrix where object $i$ is placed at top. We have:

\begin{equation*}
\begin{aligned}
 S_1=
  \left[ {\begin{array}{cccccccccccccccc}
   1 & 1 & 1 & 1 & 0 & 1 & 1 & 0 & 1 & 0 & 0 & 1 & 0 & 0 & 0 & 0 \\
   0 & 0 & 0 & 0 & 1 & 0 & 0 & 1 & 0 & 1 & 1 & 0 & 1 & 1 & 1 & 1 \\
  \end{array} } \right]\\
\\
 S_2=
  \left[ {\begin{array}{cccccccccccccccc}
   1 & 1 & 1 & 0 & 1 & 1 & 0 & 1 & 0 & 1 & 0 & 0 & 1 & 0 & 0 & 0 \\
   0 & 0 & 0 & 1 & 0 & 0 & 1 & 0 & 1 & 0 & 1 & 1 & 0 & 1 & 1 & 1 \\
  \end{array} } \right]\\
\\
 S_3=
  \left[ {\begin{array}{cccccccccccccccc}
   1 & 1 & 0 & 1 & 1 & 0 & 1 & 1 & 0 & 0 & 1 & 0 & 0 & 1 & 0 & 0 \\
   0 & 0 & 1 & 0 & 0 & 1 & 0 & 0 & 1 & 1 & 0 & 1 & 1 & 0 & 1 & 1 \\
  \end{array} } \right]\\
\\
 S_4=
  \left[ {\begin{array}{cccccccccccccccc}
   1 & 0 & 1 & 1 & 1 & 0 & 0 & 0 & 1 & 1 & 1 & 0 & 0 & 0 & 1 & 0 \\
   0 & 1 & 0 & 0 & 0 & 1 & 1 & 1 & 0 & 0 & 0 & 1 & 1 & 1 & 0 & 1 \\
  \end{array} } \right]\\
\end{aligned}
\end{equation*}

It can now be easily checked that $\ell_{\sigma_1} - \ell_{\sigma_{24}}$ does not lie in the (combined) column span of transposes of $S_{1}, S_{2}, S_{3}, S_4$.

\end{document}